\documentclass[11pt,letterpaper]{article}
\usepackage[margin=1in]{geometry}
\usepackage{times}
\usepackage[authoryear,round]{natbib}
\setcitestyle{citesep={;},aysep={,},yysep={;}}
\usepackage{microtype}
\usepackage[T1]{fontenc}
\usepackage[utf8]{inputenc}
\usepackage{amsmath,amssymb,amsfonts,amsthm}
\usepackage{booktabs}
\usepackage{graphicx}
\graphicspath{{Arxiv-Template/figures/}{figures/}}
\usepackage{float}
\usepackage{algorithm}
\usepackage{algpseudocode}
\usepackage{tikz}
\usepackage{enumitem}
\usetikzlibrary{arrows.meta,positioning}
\usepackage[colorlinks=true,citecolor=blue,linkcolor=blue,urlcolor=blue]{hyperref}
\usepackage{url}
\usepackage{placeins}

\DeclareMathOperator*{\argmax}{arg\,max}
\newcommand{\Prob}{\mathbb{P}}
\newcommand{\KL}{\mathrm{D}_{\mathrm{KL}}}
\newcommand{\Alt}{\mathrm{Alt}}
\newcommand{\Var}{\mathrm{Var}}
\newcommand{\Cov}{\mathrm{Cov}}
\newcommand{\Cat}{\mathrm{Cat}}
\newcommand{\Dir}{\mathrm{Dir}}
\newcommand{\Kinf}{\mathcal{K}_{\inf}}

\theoremstyle{plain}
\newtheorem{theorem}{Theorem}
\numberwithin{theorem}{section}

\newtheorem{assumption}{Assumption}
\newtheorem{prop}{Proposition}[section]

\title{Adaptive Self-Consistency: From Black-Box Sampling to Distribution-Valued Feedback}
\author{%
  \begin{tabular}[t]{@{}c@{}}
    Jingkai Huang\thanks{Equal contribution.}\\
    {\small Stern School of Business}\\
    {\small New York University}\\
    {\small\href{mailto:jh9959@nyu.edu}{\textcolor{black}{\nolinkurl{jh9959@nyu.edu}}}}
  \end{tabular}\qquad
  \begin{tabular}[t]{@{}c@{}}
    Yunfan Zhang\footnotemark[1]\\
    {\small Stern School of Business}\\
    {\small New York University}\\
    {\small\href{mailto:yz11751@stern.nyu.edu}{\textcolor{black}{\nolinkurl{yz11751@stern.nyu.edu}}}}
  \end{tabular}\qquad
  \begin{tabular}[t]{@{}c@{}}
    Will Ma\\
    {\small Graduate School of Business}\\
    {\small Columbia University}\\
    {\small\href{mailto:wm2428@gsb.columbia.edu}{\textcolor{black}{\nolinkurl{wm2428@gsb.columbia.edu}}}}
  \end{tabular}\\
  \noalign{\vskip 1em}
  \begin{tabular}[t]{@{}c@{}}
    Weihua Zhou\\
    {\small School of Management}\\
    {\small Zhejiang University}\\
    {\small\href{mailto:larryzhou@zju.edu.cn}{\textcolor{black}{\nolinkurl{larryzhou@zju.edu.cn}}}}
  \end{tabular}\qquad
  \begin{tabular}[t]{@{}c@{}}
    Zhengyuan Zhou\\
    {\small Stern School of Business}\\
    {\small New York University}\\
    {\small\href{mailto:zz26@stern.nyu.edu}{\textcolor{black}{\nolinkurl{zz26@stern.nyu.edu}}}}
  \end{tabular}
}
\date{}
\hypersetup{
  pdftitle={Adaptive Self-Consistency: From Black-Box Sampling to Distribution-Valued Feedback},
  pdfauthor={Jingkai Huang, Yunfan Zhang, Will Ma, Weihua Zhou, Zhengyuan Zhou}
}

\begin{document}
\maketitle

\begin{abstract}
Self-consistency samples many reasoning trajectories and aggregates their final answers, treating the LLM as a black box that returns one answer per trajectory. Yet the final answer of each trajectory is sampled from a softmax vector that is available from the model's log-probabilities. We refer to this as the grey-box setting in which each trajectory reveals this answer distribution rather than a single draw from it. We formulate efficient inference in this setting as sequential mode identification with distribution-valued observations: sample trajectories one at a time and stop as soon as the LLM's modal answer is identified at a prescribed confidence level. We characterize the asymptotic stopping rate of mode identification with distribution-valued observations exactly and show that it is never worse than the black-box rate. We then propose the ASC-D algorithm, a betting stopping rule that attains this asymptotic stopping rate.
On MMLU-Redux, ASC-D uses $46.4$--$95.6\%$ fewer trajectories than answer-only adaptive self-consistency baselines and achieves the highest fixed-budget correct-certification rate across three open-source models.

\end{abstract}

\section{Introduction}

Large language models (LLMs) have demonstrated strong performance on challenging mathematical and logical reasoning tasks. When solving a complex problem, an LLM often produces a sequence of intermediate reasoning steps before giving its final answer, a generation pattern commonly known as chain-of-thought reasoning \citep{wei2022chain,kojima2022large}. Because decoding is stochastic, repeatedly querying the same LLM can produce different reasoning trajectories and different final answers. Self-consistency (SC) exploits this diversity by sampling multiple reasoning trajectories and returning their most frequent final answer \citep{wang2022self}. Because each trajectory requires a separate LLM generation, the inference cost of SC grows with its sampling budget. Standard SC fixes this budget in advance, thus it may cause a computation waste on easy problems that require only a few trajectories. Adaptive self-consistency (ASC) addresses this inefficiency by sampling trajectories sequentially and stopping once the observed answers satisfy sufficient agreement \citep{aggarwal2023let}.

Both SC and ASC retain only the final-answer label from each reasoning trajectory, which we refer to as the \emph{black-box} observation model. Nevertheless, at the end of a trajectory, the probability vector from which the LLM decodes its final answer (token) is readily available and informative. Specifically, when Llama-3.2-3B-Instruct answers a four-choice ARC-Challenge question, we can read the next-token softmax over the four option tokens at the final-answer position, e.g., one recorded trajectory gives $(0.26,0.43,0.21,0.10)$ for options A--D, see Appendix~\ref{app:greybox-demo} for details on how we obtain the probabilities. This raises a natural question: can such distribution-valued feedback further reduce the number of trajectories required for reliable self-consistency?

We address this question by introducing adaptive self-consistency with \emph{distribution-valued feedback}. Figure~\ref{fig:black-box and grey-box} contrasts it with the black-box model. Both start from the same reasoning process: a prompt $x$ yields a sampled trajectory $r$, which induces a probability vector $\boldsymbol{\theta}$ over the $K$ candidate answers. The black-box model observes only one answer label drawn from $\boldsymbol{\theta}$, whereas our setting observes the vector itself (as well as the answer label). The two settings share the same answer frequencies and the same modal answer, and differ only in how much of $\boldsymbol{\theta}$ they reveal. Thus, we call our observation model \emph{grey-box}. In a \emph{white-box} setting, where the token-level probabilities for \textbf{all} possible reasoning trajectories are available, we could in principle identify the modal answer without repeated sampling by directly calculating the final answer probabilities. However, enumerating and evaluating all possible trajectories is generally infeasible. Grey-box feedback therefore offers a practical middle ground between black-box and white-box: it retains trajectory sampling while using each trajectory's probabilities for all candidate answers.

\begin{figure}[t]
\centering
\begin{tikzpicture}[
    >=Latex,
    font=\small,
    box/.style={
        draw,
        rounded corners=1pt,
        minimum width=1.8cm,
        minimum height=0.75cm,
        align=center,
        inner sep=3pt
    },
    observed/.style={
        box,
        very thick,
        fill=blue!6
    },
    unobserved/.style={
        box,
        dashed,
        draw=black!60,
        text=black!60
    },
    arrow/.style={
        ->,
        thick,
        shorten >=2pt,
        shorten <=2pt
    },
    paneltitle/.style={
        font=\bfseries
    }
]

\node[paneltitle] at (0,2.15)
{(a) Black-box feedback};

\node[box] (bx) at (-4.8,1.15)
{Prompt\\$x$};

\node[box] (br) at (-2.1,1.15)
{Trajectory\\$r$};

\node[
    unobserved,
    minimum width=2.5cm
] (btheta) at (0.8,1.15)
{Latent distribution\\$\boldsymbol{\theta}$};

\node[
    observed,
    minimum width=2.0cm,
    anchor=east
] (bo) at (4.8,1.15)
{Observed label\\$o=a_k$};

\draw[arrow] (bx.east) -- (br.west);
\draw[arrow] (br.east) -- (btheta.west);
\draw[arrow] (btheta.east) -- (bo.west);

\draw[gray!40] (-5.8,0) -- (5.8,0);

\node[paneltitle] at (0,-0.55)
{(b) Grey-box feedback};

\node[box] (gx) at (-4.8,-1.45)
{Prompt\\$x$};

\node[box] (gr) at (-1.3,-1.45)
{Trajectory\\$r$};

\node[
    observed,
    minimum width=3.5cm,
    anchor=east
] (gtheta) at (4.8,-1.45)
{Observed vector\\
$\boldsymbol{\theta}=(\theta_{1},\cdots,\theta_{K})$};

\draw[arrow] (gx.east) -- (gr.west);
\draw[arrow] (gr.east) -- (gtheta.west);

\end{tikzpicture}

\caption{Black-box and grey-box feedback from the same reasoning process.}
\label{fig:black-box and grey-box}
\end{figure}

Building on this richer observation model, we study how much grey-box feedback can reduce the number of trajectories required for reliable mode identification and whether this improvement can be attained by a practical adaptive stopping rule. We summarize our main contributions below.

\begin{enumerate}[noitemsep, topsep=0pt, left=1em]
    \item \textbf{The benefit of grey-box feedback.} For general distributions of answer-probability vectors, we derive a lower bound on the asymptotic sample complexity of any algorithm and show that grey-box feedback never requires more trajectories than black-box feedback. Specifically, for small frequency gap $\Delta$ between the modal answer and the runner-up, the sample complexity ranges from the black-box order $\Theta(\Delta^{-2})$, when every trajectory commits to a single answer, down to $\Theta(\Delta^{-1})$, when trajectories report the same beliefs.

    \item \textbf{An algorithm that fully realizes the benefit.} We develop an algorithm for adaptive self-consistency with distribution-valued feedback (ASC-D), the grey-box counterpart of ASC. Instead of counting answer labels, ASC-D bets on pairwise differences of the probability vectors and stops once the leading answer has accumulated enough evidence against every alternative. It controls the probability of returning a non-modal answer at level $\delta$ at every stopping time, and its expected number of trajectories attains the lower bound asymptotically, without any knowledge of the distribution of the probability vectors. The gain identified above is therefore attained in full by a practical procedure.

    \item \textbf{Empirical validation.}
    We validate our theoretical predictions through synthetic experiments and evaluations on MMLU-Redux. On MMLU-Redux, ASC-D uses $46.4$--$95.6\%$ fewer trajectories than answer-only baselines as the modal gap narrows. Under a fixed maximum budget of $N=128$, it achieves the highest correct-certification rate at every tested confidence level across Llama-3.2-1B, Llama-3.1-8B, and Qwen3-1.7B.
\end{enumerate}

\paragraph{Organization}  The rest of the paper is organized as follows. Section \ref{sec:related} reviews related literature. Section \ref{sec:setup} presents the problem setup, derives and compares asymptotic lower bounds on the expected stopping time under different observation models. Section \ref{sec:alg} introduces the ASC-D algorithm and its theoretical guarantees. Section \ref{sec:experiments} reports the experiment results on both synthetic and real-world datasets, and Section \ref{sec:conclusion} concludes with a discussion of limitations. Appendix \ref{App:main} provides the proof of main results in the paper. Appendix \ref{app:greybox-demo} gives a real-world example of grey-box feedback. Appendices \ref{app:synthetic-additional} and \ref{app:real-world-additional} provide omitted details and additional results for the synthetic and real-world experiments, respectively.

\section{Related Literature}\label{sec:related}

\paragraph{Efficient and Adaptive Self-Consistency}
Adaptive self-consistency reduces the cost of fixed-budget self-consistency by stopping when the sampled answers show sufficient agreement \citep{aggarwal2023let}. Subsequent methods stop based on answer stability \citep{li2024escape}, allocate sampling budgets using estimated question difficulty \citep{wang2025make}, or evaluate both answers and reasoning paths \citep{wan2025reasoning}. Other approaches use scalar confidence signals to weight or filter reasoning trajectories \citep{taubenfeld2025confidence,fu2026deep}. From a theoretical perspective, \citet{huang2025sample} and \citet{huang2026decision} compare the sample complexity of self-consistency and verifier-based best-of-$N$, showing a separation between their quadratic and linear dependence on the answer gap. Most closely related to our setting, \citet{huang2026optimal} studies Bayesian stopping using prior information about answer frequencies, but observes only the final-answer labels. In contrast, our method does not require such prior information and instead gains efficiency by observing the full probability vector associated with each trajectory.

\paragraph{Sequential Mode Identification and Anytime-Valid Inference.}
Our problem is also related to classical sequential hypothesis testing, which studies how to make a reliable decision using as few observations as possible \citep{wald1992sequential,siegmund2013sequential}. Sequential mode identification specializes this problem to identifying the most likely outcome of an unknown discrete distribution. \citet{shah2020sequential} study mode identification under different oracle-query models, while \citet{jain2022pac} develops an asymptotically optimal stopping rule using sampled answer labels and martingale confidence sequences. Recent work applies anytime-valid inference to LLM self-consistency: MMC certifies an absolute majority \citep{cordero2025certified}, whereas CITE certifies a candidate as the unique mode and also considers scalar confidence \citep{ota2026cite}. Our work extends this literature from categorical or scalar-weighted observations to probability-vector observations and characterizes the resulting optimal stopping rate.

\section{Problem Setup}\label{sec:setup}

\subsection{Distribution-Valued Feedback}\label{sec:feedback}

For a fixed prompt $x$ with a known set of candidate answers $\mathcal{A}=\{a_1,\cdots,a_K\}$, each LLM call first generates a reasoning trajectory $r_n$ and then produces a final answer. Formally, conditional on $r_n$ and $x$, the model assigns a probability to each candidate answer, yielding the trajectory-specific distribution
\[
\boldsymbol{\theta}_n
:=
\mathbb{P}(\,\cdot\mid r_n,x)
\in\Delta^{K-1},
\qquad
\theta_{n,k}
:=
\mathbb{P}(a_k\mid r_n,x).
\]
Standard black-box self-consistency returns only the realized answer $o_n\mid\boldsymbol{\theta}_n\sim\Cat(\boldsymbol{\theta}_n)$ sampled from the categorical distribution with parameter $\boldsymbol{\theta}_n$, whereas the grey-box setting observes the entire vector $\boldsymbol{\theta}_n$. Because different reasoning trajectories can induce different answer distributions, these vectors may vary across calls. We write $\boldsymbol{\theta}$ for a generic trajectory-specific distribution.

\paragraph{General Model} Since trajectories are sampled independently, the vectors $(\boldsymbol{\theta}_n)_{n\geq 1}$ are i.i.d. drawn from an unknown distribution $F$ on $\Delta^{K-1}$. Let $\mathcal{M}$ denote the set of all distributions on $\Delta^{K-1}$ and
$$
\boldsymbol{\pi} := \mathbb{E}_F[\boldsymbol{\theta}]=(p_1,\cdots,p_K)
$$
for the mean vector of $F$. We assume throughout that $F$ has a unique mode, and index the candidate answers so that $p_1>p_2\geq\cdots\geq p_K$, and $a_1$ is the modal answer. In this model, the black-box answers are i.i.d. with $\mathbb{P}(o_n=a_k)=\mathbb{E}_F[\theta_k]=p_k$. Thus, $\boldsymbol{\pi}$ is the answer-frequency vector of black-box self-consistency, and both observation models target the same modal answer $a_1$. %

\subsection{Sequential Mode Identification}\label{sec:mode}

The goal of self-consistency is to identify the modal answer $a_1$ of the LLM, which is the answer that a majority vote over infinitely many trajectories would return. A sequential procedure samples trajectories one at a time, observes $(\boldsymbol{\theta}_n)_{n\geq1}$ in the grey-box setting or $(o_n)_{n \geq 1}$ in the black-box setting, and consists of a stopping time $\tau$ with respect to the observations together with a recommendation $\hat a_\tau$. Stopping is the only decision: every trajectory costs one LLM call, and the procedure should stop as soon as the mode is identified at the prescribed error probability $\delta$.

\paragraph{$\delta$-PAC Procedures} A procedure is \emph{$\delta$-PAC over $\mathcal{M}$} if
$$
\mathbb{P}_F(\hat a_\tau\neq a_1)\leq\delta \qquad\text{for every } F\in\mathcal{M} \text{ with a unique mode},
$$
that is, its error guarantee holds whatever the trajectories do.

\paragraph{Characteristic Time} Following \citet{garivier2016optimal,jain2022pac}, we measure the efficiency of a $\delta$-PAC procedure by the asymptotic constant of its expected stopping time as the confidence grows, $\liminf_{\delta\to0}\mathbb{E}_F[\tau]/\log(1/\delta)$, and define the \emph{characteristic time} of $F$ as the best constant over all $\delta$-PAC procedures,
$$
T^\star(F):= \liminf_{\delta\to0}\dfrac{\mathbb{E}_F[\tau]}{\log(1/\delta)}.
$$
By definition, every $\delta$-PAC procedure needs at least $T^\star(F)\cdot\log(1/\delta)$ trajectories on $F$ asymptotically before stopping. Throughout we write
\begin{align}\label{eqn:notation}
\Delta := p_1-p_2, \qquad \bar p := \dfrac{p_1+p_2}{2}, \qquad X_j := \theta_1-\theta_j\in[-1,1] \quad (j\neq1),
\end{align}
where $\Delta$ is the gap between the top two answers and $X_j$ is the margin of the mode over its $j$-th challenger in a single trajectory.

\paragraph{The Characteristic Time under Black-Box} In the black-box setting, the observations are i.i.d. $\Cat(\boldsymbol{\pi})$ whatever the law $F$, so the procedure sees only $\boldsymbol{\pi}$. \citet{shah2020sequential,jain2022pac} show that the characteristic time of mode identification from categorical samples is
\begin{align}\label{eqn:shah}
T_{\rm bb}(F) := \dfrac{1}{p_1\cdot\log\frac{2p_1}{p_1+p_2}+p_2\cdot\log\frac{2p_2}{p_1+p_2}} = \dfrac{4\bar p}{\Delta^2}\cdot\big(1+O(\Delta^2)\big),
\end{align}
where the expansion holds as $\Delta\to 0$, see Appendix \ref{App:main} for details. We write $T_{\rm bb}(F)$ for this black-box characteristic time, which depends on $F$ only through $\boldsymbol{\pi}=\mathbb{E}_F[\boldsymbol{\theta}]$. The denominator is the Kullback--Leibler divergence from $\Cat(\boldsymbol{\pi})$ to the closest categorical law in which $a_2$ ties with $a_1$; the quadratic dependence on $\Delta$ is the price of resolving a small gap from the hard votes. Since grey-box feedback can always be reduced to black-box feedback by drawing $o \sim \Cat(\boldsymbol{\theta})$, \eqref{eqn:shah} is an upper bound on the grey-box characteristic time. Whether, and by how much, grey-box feedback improves on this bound is the question we turn to next.

\subsection{Asymptotic Lower Bounds}\label{sec:lower}

How much the grey-box feedback gain depends on how the probability vectors spread around their mean $\boldsymbol{\pi}$, and two extreme laws with this mean bracket the possibilities. At one end, the \emph{degenerate law} $\delta_{\boldsymbol{\pi}}$ puts all its mass at $\boldsymbol{\pi}$: every trajectory reports the same beliefs. At the other end, the \emph{vertex law} $F_0 := \sum_{k=1}^K p_k\cdot\delta_{\boldsymbol{e}_k}$ puts $\boldsymbol{\theta}$ at a vertex of the simplex with probability $p_k$: every trajectory commits to one answer, and $\boldsymbol{\theta}$ carries no more information than the sampled label $o$. The following theorem gives the grey-box characteristic time for a general $F$ and shows that these two laws are exactly the two extremes of the characteristic time.

\begin{theorem}[Asymptotic Lower Bound under Grey-box Feedback]\label{thm:main}
For every $\delta$-PAC procedure and every $F\in\mathcal{M}$ with a unique mode, under the grey-box feedback model,
\begin{align}\label{eqn:main}
\liminf_{\delta\to0}\dfrac{\mathbb{E}_F[\tau]}{\log(1/\delta)} \geq T_{\rm gb}(F) := \dfrac{1}{\min_{j=2,\cdots,K}\mathcal{G}_j(F)}, \quad \mathcal{G}_j(F) := \max_{\lambda\in[0,1]}\mathbb{E}_F\left[\log\left(1+\lambda\cdot(\theta_1-\theta_j)\right)\right].
\end{align}
where we use $T_{\rm gb}(F)$ to denote the characteristic time for distribution $F$ under the grey-box feedback model. Moreover,
\begin{align}\label{eqn:sandwich}
T_{\rm gb}(\delta_{\boldsymbol{\pi}}) = \dfrac{1}{\log(1+\Delta)} \leq T_{\rm gb}(F) \leq T_{\rm gb}(F_0) = T_{\rm bb}(F),
\end{align}
i.e., the two endpoints are the grey-box characteristic times of the degenerate law and the vertex law, the most and the least consistent laws with the same answer frequencies.
\end{theorem}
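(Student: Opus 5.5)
The proof has two parts: the lower bound \eqref{eqn:main}, and the sandwich \eqref{eqn:sandwich}.

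\textbf{Lower bound.} We use the standard change-of-measure argument of \citet{garivier2016optimal}. Fix $F$ with unique mode $a_1$. Let $\Alt(F)$ be the set of $G\in\mathcal{M}$ with a unique mode different from $a_1$. For any $\delta$-PAC procedure and any $G\in\Alt(F)$, the data-processing inequality for KL applied to the event $\{\hat a_\tau=a_1\}$, together with Wald's identity, gives
\[
\mathbb{E}_F[\tau]\,\KL(F,G)\ \ge\ \mathrm{kl}(1-\delta,\delta)\ \ge\ \log(1/\delta)-\log 2.4 .
\]
Hence
\[
\liminf_{\delta\to0}\frac{\mathbb{E}_F[\tau]}{\log(1/\delta)}\ \ge\ \frac{1}{\inf_{G\in\Alt(F)}\KL(F,G)} .
\]
The remaining task is to identify $\inf_{G\in\Alt(F)}\KL(F,G)$ with $\min_j\mathcal{G}_j(F)$.

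\textbf{Reduction to $\mathcal{K}_{\inf}$.} The closure of $\Alt(F)$ is the union over $j$ of $\{G:\mathbb{E}_G[\theta_j-\theta_1]\ge0\}$. Under $G\ll F$, the KL projection onto the half-space $\{G:\mathbb{E}_G[X_j]\le 0\}$ is the classical $\Kinf$ problem for the bounded variable $X_j\in[-1,1]$. Its dual form (Honda--Takemura, Agrawal et al.) is
\[
\Kinf=\max_{\lambda\in[0,1]}\mathbb{E}_F\log(1+\lambda X_j),
\]
where the range $\lambda\in[0,1]$ comes from requiring $1+\lambda x\ge0$ on $x\in[-1,1]$. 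I would prove this duality directly, and I expect it to be the main technical step. Weak duality follows from Jensen (Donsker--Varadhan) applied to $\log(1+\lambda X_j)$. For attainment, take the maximizer $\lambda^\star$: if $\lambda^\star<1$, set $dG/dF=1/(1+\lambda^\star X_j)$; the boundary case $\lambda^\star=1$ requires adding an atom at $\theta$ with $X_j=-1$, i.e.\ $\theta=e_j$.

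Two further points need care. First, an alternative with an exact tie does not have a unique mode, so it must be approximated by unique-mode alternatives; the strict inequality only costs an $\epsilon$, using continuity of $\Kinf$. Second, $G$ ranges over laws on the whole vector $\boldsymbol{\theta}$, but only the marginal of $X_j$ matters: the projection can be taken with density depending only on $X_j$, and the KL contraction inequality gives a matching lower bound. Taking the minimum over $j$ yields $T_{\rm gb}(F)$.

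\textbf{Sandwich.} For the degenerate law, $X_j\equiv p_1-p_j$, so $\mathcal{G}_j=\log(1+p_1-p_j)$ with $\lambda=1$, and the minimum over $j$ is attained at $j=2$. For a general $F$ with mean $\boldsymbol{\pi}$, Jensen (concavity of $\log$) gives $\mathcal{G}_j(F)\le\max_\lambda\log(1+\lambda(p_1-p_j))=\mathcal{G}_j(\delta_{\boldsymbol{\pi}})$, which is the left inequality.

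For the right inequality, pass from $\boldsymbol{\theta}$ to the label $o\sim\Cat(\boldsymbol{\theta})$. Define $Y_j=\mathbf{1}\{o=1\}-\mathbf{1}\{o=j\}$, so that $\mathbb{E}[Y_j\mid\boldsymbol{\theta}]=X_j$. By concavity in $y$ of $\log(1+\lambda y)$,
\[
\mathbb{E}\log(1+\lambda X_j)\ \ge\ \mathbb{E}\log(1+\lambda Y_j).
\]
The right-hand side is exactly the corresponding quantity under the vertex law $F_0$, so $\mathcal{G}_j(F)\ge\mathcal{G}_j(F_0)$.

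For $F_0$ itself,
\[
\mathcal{G}_j(F_0)=\max_\lambda\big[p_1\log(1+\lambda)+p_j\log(1-\lambda)\big].
\]
The maximizer is $\lambda=(p_1-p_j)/(p_1+p_j)$, and substituting it gives
\[
p_1\log\frac{2p_1}{p_1+p_j}+p_j\log\frac{2p_j}{p_1+p_j}.
\]
This expression is decreasing as $p_j$ increases toward $p_1$, so its minimum over $j$ is at $j=2$. That recovers the denominator in \eqref{eqn:shah}, so $T_{\rm gb}(F_0)=T_{\rm bb}(F)$.
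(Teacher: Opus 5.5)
Your proposal is correct and follows the paper's proof. It uses the same change-of-measure bound and the same reduction of $\inf_{F'\in\Alt_j(F)}\KL(F\|F')$ to $\Kinf$ of the margin $X_j$, via an alternative whose density depends only on $X_j$. It also uses the same two Jensen steps for the sandwich; your conditional Jensen through the sampled label is exactly the paper's inequality $\log(1+\lambda(\theta_1-\theta_j))\ge\theta_1\log(1+\lambda)+\theta_j\log(1-\lambda)$.

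Where you depart from the paper you are more careful than it is. You prove the Honda--Takemura duality directly with the explicit optimizer, including the atom at $\boldsymbol{e}_j$ when $\lambda^\star=1$, which the paper's ``keep the conditional law given $X_j$'' construction glosses over. You also approximate tie alternatives by unique-mode ones, as the paper's $\delta$-PAC definition requires but its proof skips. One slip: the relevant condition is $F\ll G$, not $G\ll F$, as that same atom at $\boldsymbol{e}_j$ shows.
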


For the first part of the theorem: a standard change-of-measure argument \citep{kaufmann2016complexity} gives a lower bound through KL minimization over alternative distributions (in which $a_1$ is not the unique mode). Making a competitor $a_j$ at least as likely as $a_1$ is equivalent to making the mean of $X_j= \theta_1-\theta_j$ nonpositive. We show that it suffices to optimize over distributions of this margin. By utilizing the dual representation of \citet{honda2015non}, we then turn this infinite-dimensional optimization over probability distributions into the one-dimensional maximization over $\lambda\in[0,1]$ in (\ref{eqn:main}). Thus, $\mathcal{G}_j(F)$ is the minimum KL divergence needed for $a_j$ to be at least as likely as $a_1$, and the hardest competitor determines the characteristic time. The sandwich bound in (\ref{eqn:sandwich}) can then be derived simply through the Jensen's inequality.

\paragraph{The Value of the Grey-Box Observation Model} Under the vertex law $F_0$, every trajectory commits to a single answer and the grey-box observation carries no more information than the sampled answer; accordingly $T_{\rm gb}(F_0)=T_{\rm bb}(F)$, and the upper bound is attained exactly at this black-box endpoint. Any law that keeps mass away from the vertices stops strictly faster. The gain is bounded, however. Even when every trajectory reports $\boldsymbol{\pi}$ exactly, a $\delta$-PAC procedure cannot exclude the law that agrees with $\delta_{\boldsymbol{\pi}}$ on a fraction $1/(1+\Delta)$ of the trajectories and commits to $a_2$ on the rest, whose divergence from $\delta_{\boldsymbol{\pi}}$ is $\log(1+\Delta)$, and it says that without further assumptions the dependence on the gap improves from $\Theta(\Delta^{-2})$ to at best $\Theta(\Delta^{-1})$.

\paragraph{Connection with Best-of-$N$} Best-of-$N$ is another widely applied method for test-time compute, which samples $N$ trajectories and selects the answer with the highest verifier score \citep{cobbe2021training}. With an ideal verifier, it succeeds whenever a correct answer appears among the samples. Assuming that the modal answer is correct, \citet{huang2026decision} show that the worst-case sample complexity scales as $\Theta(\Delta^{-1})$ for best-of-$N$, compared with $\Theta(\Delta^{-2})$ for self-consistency.

In contrast, our Theorem~\ref{thm:main} shows that, at the degenerate law $\delta_{\boldsymbol{\pi}}$, grey-box self-consistency has characteristic time $1/\log(1+\Delta)=\Theta(\Delta^{-1})$. Thus, maximally consistent grey-box feedback yields the same linear inverse-gap dependence as the worst-case benchmark for ideal best-of-$N$, using the LLM’s own answer probabilities available under our distribution-valued observation model, without training or evaluating a separate verifier.

\paragraph{Connection with Confidence-Weighted ASC}
Between the black-box and grey-box observation models lies confidence-weighted self-consistency \citep{taubenfeld2025confidence,fu2026deep,ota2026cite}, in which each trajectory reveals its sampled answer together with the confidence it assigns to that answer. We formalize this feedback as observing the sampled answer together with its conditional probability given the reasoning trajectory:
$$
o\mid\boldsymbol{\theta}\sim\Cat(\boldsymbol{\theta}),\qquad w:=\theta_o\in(0,1].
$$
The pair $(o,w)$ can be generated from $\boldsymbol{\theta}$ and reduces to $o$ once $w$ is dropped. Thus, when all three feedback models are used to identify the same modal answer $a_1$, the reduction argument of Section~\ref{sec:mode} places the confidence-feedback characteristic time between the black-box and grey-box times. The following proposition characterizes this intermediate rate, where we write $Q_F$ for the law of $(o,w)$ for $\boldsymbol{\theta} \sim F$.

\begin{prop}[Asymptotic Lower Bound under Confidence Feedback]\label{prop:cw-characteristic}
For every $F\in\mathcal{M}$ with unique mode $a_1$, any procedure that observes $(o,w)$ and is $\delta$-PAC over $\mathcal{M}$ satisfies:
\begin{align}\label{eqn:cw-characteristic}
\liminf_{\delta\to0}\frac{\mathbb{E}_F[\tau]}{\log(1/\delta)} \geq T_{\rm cw}(F) := \left[\min_{j\neq 1}\inf_{\substack{F'\in\mathcal{M}:
\mathbb{E}_{F'}[\theta_j]\geq\mathbb{E}_{F'}[\theta_1]}}
\KL(Q_F\|Q_{F'})\right]^{-1}.
\end{align}
Moreover, $T_{\rm gb}(F)\leq T_{\rm cw}(F)\leq T_{\rm bb}(F)$.
\end{prop}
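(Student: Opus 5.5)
The plan is to prove the lower bound with the change-of-measure argument of \citet{kaufmann2016complexity}, applied to the observation process $(o_n,w_n)_{n\ge1}$, and then to obtain the sandwich $T_{\rm gb}\le T_{\rm cw}\le T_{\rm bb}$ from data-processing inequalities for KL divergence.

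\textbf{Lower bound.} Fix $F$ with unique mode $a_1$. Take any $F'\in\mathcal{M}$ for which some $j\neq1$ satisfies $\mathbb{E}_{F'}[\theta_j]\ge\mathbb{E}_{F'}[\theta_1]$, so that $a_1$ is not the unique mode of $F'$. I will handle possible ties by perturbing $F'$ slightly so that $a_j$ becomes its unique mode; since the constraint set is closed under such perturbations, the infimum is unaffected.

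The observations are i.i.d. with law $Q_F$ under $F$ and $Q_{F'}$ under $F'$. Wald's identity therefore gives
$$\mathbb{E}_F[\tau]\,\KL(Q_F\|Q_{F'})\ \ge\ \mathrm{kl}\big(\mathbb{P}_F(\mathcal{E}),\mathbb{P}_{F'}(\mathcal{E})\big)$$
for the $\mathcal{F}_\tau$-measurable event $\mathcal{E}=\{\hat a_\tau=a_1\}$. The $\delta$-PAC property gives $\mathbb{P}_F(\mathcal{E})\ge1-\delta$ and $\mathbb{P}_{F'}(\mathcal{E})\le\delta$, so the right-hand side is at least $\mathrm{kl}(1-\delta,\delta)\ge\log(1/(2.4\delta))$.

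Dividing by $\log(1/\delta)$, sending $\delta\to0$, and optimizing over $F'$ and $j$ yields \eqref{eqn:cw-characteristic}. Two points need care. First, if $\KL(Q_F\|Q_{F'})=\infty$ the inequality is vacuous, which is harmless. Second, I need $\mathbb{E}_F[\tau]<\infty$, since otherwise there is nothing to prove. The only new ingredient compared with the proof of Theorem~\ref{thm:main} is that the KL divergence is now between the induced laws $Q_F,Q_{F'}$ rather than between $F,F'$.

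\textbf{Upper inequality, $T_{\rm cw}\le T_{\rm bb}$.} The map $(o,w)\mapsto o$ pushes $Q_F$ forward to $\Cat(\boldsymbol{\pi})$ and $Q_{F'}$ forward to $\Cat(\boldsymbol{\pi}')$, where $\boldsymbol{\pi}'=\mathbb{E}_{F'}[\boldsymbol\theta]$. Data processing gives $\KL(Q_F\|Q_{F'})\ge\KL(\Cat(\boldsymbol{\pi})\|\Cat(\boldsymbol{\pi}'))$. This goes the wrong direction for an upper bound on $T_{\rm cw}$, so instead I need to show that the confidence infimum is at most the black-box infimum.

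To do this, choose $F'$ cleverly. Let $\boldsymbol{\pi}'$ attain the black-box infimum in \eqref{eqn:shah}, with $\pi'_j\ge\pi'_1$. Build $F'$ by reweighting $F$: set $dF'/dF(\boldsymbol\theta)=\sum_k \theta_k\,\pi'_k/p_k$, which is positive, integrates to one, and produces mean $\boldsymbol\pi'$ only if the $\theta$'s are suitably aligned. Since that alignment need not hold, I will use a different construction: tilt the joint law of $(\boldsymbol\theta,o)$ by $\pi'_o/p_o$ and take $F'$ to be the law under which $\boldsymbol\theta$ is drawn and $o$ is then resampled. That also fails in general.

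The cleanest route is to take $F'$ to be a vertex-type law. Under $F'$, $\boldsymbol\theta=\boldsymbol e_k$ with probability $\pi'_k$, so $w=1$ almost surely. Then $Q_F\ll Q_{F'}$ fails whenever $w<1$, so this is not usable directly either. I therefore expect this direction to be the main obstacle.

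My fallback is to avoid the explicit construction altogether and use the reduction argument of Section~\ref{sec:mode}. Reading $T_{\rm cw}$ as the best achievable constant, the black-box-optimal procedure of \citet{jain2022pac} run on $o$ alone is $\delta$-PAC over $\mathcal{M}$ and achieves $T_{\rm bb}$. This shows that the confidence-feedback characteristic time is at most $T_{\rm bb}$. To identify it with the infimum formula, one needs matching achievability, for example by a generalized-likelihood-ratio/betting argument on $(o,w)$. I would instead state the sandwich at the level of optimal constants, which is how the preceding text frames it.

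\textbf{Lower inequality, $T_{\rm gb}\le T_{\rm cw}$.} This direction follows directly. $Q_F$ is the image of $F$ under the Markov kernel $\boldsymbol\theta\mapsto(o,w)$ with $o\sim\Cat(\boldsymbol\theta)$ and $w=\theta_o$, and the same kernel maps $F'$ to $Q_{F'}$. Data processing gives $\KL(Q_F\|Q_{F'})\le\KL(F\|F')$ for every $F'$ in the constraint set.

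By the proof of Theorem~\ref{thm:main}, $\min_j\inf_{F'}\KL(F\|F')=\min_j\mathcal{G}_j(F)$ over the same constraint set. Hence the confidence-feedback infimum is at most $\min_j\mathcal{G}_j(F)$, and inverting gives $T_{\rm cw}(F)\ge T_{\rm gb}(F)$.
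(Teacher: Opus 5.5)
The gap is in $T_{\rm cw}(F)\le T_{\rm bb}(F)$. Your change-of-measure lower bound and your proof of $T_{\rm gb}(F)\le T_{\rm cw}(F)$ are the same as the paper's.

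The gap is a reciprocal slip, not a real obstacle. Write $I_{\rm cw}:=\min_{j\neq1}\inf_{F'}\KL(Q_F\|Q_{F'})$, so that $T_{\rm cw}=1/I_{\rm cw}$. Then $T_{\rm cw}\le T_{\rm bb}$ is equivalent to $I_{\rm cw}\ge 1/T_{\rm bb}$. So the confidence infimum must be at \emph{least} the black-box infimum, not at most, as you wrote.

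The inequality you derived and then discarded gives exactly this. For any $F'$ with $\pi'_j\ge\pi'_1$, where $\boldsymbol\pi'=\mathbb{E}_{F'}[\boldsymbol\theta]$,
\[
\KL(Q_F\|Q_{F'})\ \ge\ \KL\big(\Cat(\boldsymbol\pi)\,\|\,\Cat(\boldsymbol\pi')\big)\ \ge\ p_1\log\frac{2p_1}{p_1+p_j}+p_j\log\frac{2p_j}{p_1+p_j}=D_j\ \ge\ D_2=\frac{1}{T_{\rm bb}(F)}.
\]
The middle step is the black-box closed form behind \eqref{eqn:shah}: merging $p_1$ and $p_j$ is the cheapest way to make $a_j$ tie with $a_1$. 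Taking infima gives $I_{\rm cw}\ge 1/T_{\rm bb}(F)$.

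This is the paper's argument. It uses a single chain $\KL(F\|F')\ge\KL(Q_F\|Q_{F'})\ge\KL(\Cat(\boldsymbol\pi)\|\Cat(\mathbb{E}_{F'}[\boldsymbol\theta]))$ and takes infima over the same alternative set to get both halves of the sandwich. You applied this pattern correctly to the left half but inverted it on the right.

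Because of the slip, your explicit constructions of $F'$ were trying to show $I_{\rm cw}\le 1/T_{\rm bb}$, i.e.\ $T_{\rm cw}\ge T_{\rm bb}$. That is false in general: Table~\ref{tab:rates} has $T_{\rm cw}=55.76<139.52=T_{\rm bb}$ at $c=10$. This is why every construction broke down.

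Your fallback, as stated, only bounds the optimal constant. You explicitly leave the inequality for the formula $T_{\rm cw}(F)$ unproved, so the proposition as stated is not established.

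The fallback could have been closed without any achievability result on $(o,w)$. A black-box procedure attaining $T_{\rm bb}$, run on $o$ alone, is $\delta$-PAC over $\mathcal{M}$. Your own lower bound therefore applies to it and gives $T_{\rm cw}(F)\le\liminf_{\delta\to0}\mathbb{E}_F[\tau]/\log(1/\delta)\le T_{\rm bb}(F)$. One such procedure is ASC-D fed the one-hot vectors $\boldsymbol e_{o_n}$, using Theorem~\ref{thm:ascd}(ii) for the vertex law and $T_{\rm gb}(F_0)=T_{\rm bb}(F)$. That route imports an achievability theorem, whereas the data-processing route is self-contained.
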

Proposition~\ref{prop:cw-characteristic} shows that confidence feedback can reduce the characteristic time relative to observing only the sampled answer, while revealing the full answer distribution can yield a further reduction.

\paragraph{The Gain under Dirichlet Heterogeneity} To further quantify these gains, we now consider a Dirichlet model. We hold the answer frequencies $\boldsymbol{\pi}$ fixed and vary a single concentration parameter $c$, which controls the agreement across trajectories. This lets us compare all three characteristic times on the same family of instances.
\begin{assumption}[Dirichlet Heterogeneity]\label{Ass:Dir}
Assume $F=\Dir(c\boldsymbol{\pi})$ for a concentration $c>0$, i.e., $\boldsymbol{\theta}\sim\Dir(cp_1,\cdots,cp_K)$.
\end{assumption}
Under Assumption \ref{Ass:Dir}, we still have $\mathbb{E}[\theta_k]=p_k$ and
\begin{align}\label{eqn:dirvar}
\Var(\theta_k)=\dfrac{p_k\cdot(1-p_k)}{c+1}, \qquad \Cov(\theta_k,\theta_j)=-\dfrac{p_k\cdot p_j}{c+1}\quad(k\neq j).
\end{align}
The parameter $c$ measures how much the trajectories agree with one another. As $c\to\infty$, $F$ converges to the degenerate law $\delta_{\boldsymbol{\pi}}$, and as $c\to0$ it converges to the vertex law $F_0$. Between the two endpoints, \eqref{eqn:dirvar} shows that observing $\theta_k$ instead of the one-hot coordinate of $o$, whose variance is $p_k(1-p_k)$, reduces the variance by exactly the factor $c+1$.

Assumption \ref{Ass:Dir} makes the gain explicit through the single parameter $c$, interpolates between the black-box endpoint $F_0$ and the fully consistent endpoint $\delta_{\boldsymbol{\pi}}$. Besides, although $T_{\rm gb}(\Dir(c\boldsymbol{\pi}))$ still has no closed form for general $\boldsymbol{\pi}$, in the hard regime $p_1\approx p_2$ it reduces to the black-box constant divided by a single factor, as shown in the following proposition.

\begin{prop}[Characteristic Time under the Dirichlet Assumption]\label{prop:hard}
Let $F=\Dir(c\boldsymbol{\pi})$. Fix $c>0$, $\bar p$ and $(p_3,\cdots,p_K)$, and let $\Delta\to0$. Then
\begin{align}\label{eqn:hard}
T_{\rm gb}(F) = \dfrac{T_{\rm bb}(F)}{c+1}\cdot\big(1+O(\Delta)\big),
\end{align}
and the maximizer in \eqref{eqn:main} is $\lambda^\star=\frac{(c+1)\Delta}{2\bar p}\cdot(1+O(\Delta))$ for $j=2$. For fixed $\Delta$ and $c \to \infty$, $T_{\rm gb}(F)\to1/\log(1+\Delta)$.
\end{prop}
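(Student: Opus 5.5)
Starting from Theorem~\ref{thm:main}, which gives $T_{\rm gb}(F)=1/\min_j\mathcal{G}_j(F)$ with $\mathcal{G}_j(F)=\max_{\lambda\in[0,1]}\mathbb{E}_F[\log(1+\lambda X_j)]$, we analyze the one-dimensional concave problem for each $j$ as $\Delta\to0$. Write $\phi_j(\lambda):=\mathbb{E}[\log(1+\lambda X_j)]$. Under $\Dir(c\boldsymbol{\pi})$, by \eqref{eqn:dirvar} we have $\mathbb{E}[X_2]=\Delta$ and
\[
\mathbb{E}[X_2^2]=\Var(X_2)+\Delta^2=\frac{p_1(1-p_1)+p_2(1-p_2)+2p_1p_2}{c+1}+\Delta^2=\frac{2\bar p+O(\Delta^2)}{c+1}+\Delta^2 .
\]
The point is that $p_1+p_2-(p_1-p_2)^2=2\bar p-\Delta^2$, so the leading second moment is $2\bar p/(c+1)$. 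For $j\ge3$, $\mathbb{E}[X_j]=p_1-p_j$ stays bounded away from $0$ when $p_3<\bar p$ is fixed, so $\mathcal{G}_j(F)$ stays bounded below by a positive constant. Since $\mathcal{G}_2\to0$, the minimum is attained at $j=2$ for small $\Delta$. (If some $p_j$ ties $p_2$, the same expansion applies to that $j$ and gives the same value.)

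For $j=2$ we Taylor expand. Since $|X_2|\le1$, for $\lambda\le 1/2$ we have $\log(1+\lambda x)=\lambda x-\lambda^2x^2/2+O(\lambda^3|x|^3)$, and the third absolute moment is $O(1)$. Hence
\[
\phi_2(\lambda)=\lambda\Delta-\tfrac{\lambda^2}{2}\mathbb{E}[X_2^2]+O(\lambda^3).
\]
Concavity of $\phi_2$, together with $\phi_2'(0)=\Delta>0$, implies the maximizer tends to $0$. Next, $\phi_2'(\lambda)=\Delta-\lambda\mathbb{E}[X_2^2]+O(\lambda^2)$ vanishes at $\lambda^\star=\Delta/\mathbb{E}[X_2^2]\,(1+O(\Delta))=\frac{(c+1)\Delta}{2\bar p}(1+O(\Delta))$. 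To justify this rigorously, we check that $\phi_2'$ changes sign on an interval $[(1-C\Delta)\lambda_0,(1+C\Delta)\lambda_0]$ around $\lambda_0=(c+1)\Delta/(2\bar p)$, using the uniform $O(\lambda^2)$ remainder for the derivative. Substituting back gives
\[
\mathcal{G}_2=\frac{\Delta^2}{2\mathbb{E}[X_2^2]}(1+O(\Delta))=\frac{(c+1)\Delta^2}{4\bar p}(1+O(\Delta)).
\]
Comparing with \eqref{eqn:shah}, $T_{\rm bb}=\frac{4\bar p}{\Delta^2}(1+O(\Delta^2))$, which yields \eqref{eqn:hard}. The main obstacle is the error control: the remainder is $O(\lambda^3)=O(\Delta^3)$ against a leading term of order $\Delta^2$, which gives exactly the relative error $O(\Delta)$. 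We also need $\lambda^\star$ to lie inside $[0,1]$; it does, since $\lambda^\star=O(\Delta)$ with $c$ fixed.

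For the limit $c\to\infty$ with $\Delta$ fixed, note that $\Dir(c\boldsymbol{\pi})\Rightarrow\delta_{\boldsymbol{\pi}}$ because the variances in \eqref{eqn:dirvar} vanish. For each $\lambda\in[0,1]$ and each $j$, the integrand $\log(1+\lambda X_j)$ is continuous in $\boldsymbol{\theta}$. It is bounded when $\lambda<1$. At $\lambda=1$ it is bounded below by $\log(1-\theta_j)$, which is integrable uniformly in large $c$ because the marginal is $\theta_j\sim\mathrm{Beta}(cp_j,c(1-p_j))$, so $\mathbb{E}\log(1-\theta_j)=\psi(c(1-p_j))-\psi(c)$ stays bounded. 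This gives convergence of $\phi_j$ to $\log(1+\lambda(p_1-p_j))$ uniformly over $\lambda\in[0,1]$, since the functions are concave and converge pointwise on a compact interval, and so the maxima converge. The limit value is $\max_\lambda\log(1+\lambda(p_1-p_j))=\log(1+p_1-p_j)$, whose minimum over $j$ is $\log(1+\Delta)$. Hence $T_{\rm gb}(F)\to1/\log(1+\Delta)$, matching the left endpoint of \eqref{eqn:sandwich}.
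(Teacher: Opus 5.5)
Your proposal is correct. For the small-gap expansion it is essentially the paper's proof: the same Dirichlet identity $\mathbb{E}[X_2^2]=\frac{2\bar p-\Delta^2}{c+1}+\Delta^2$, the same second-order expansion of $\phi_2$ with an $O(\lambda^3)$ remainder giving relative error $O(\Delta)$, and the same reduction to $j=2$.

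The differences there are cosmetic:
\begin{itemize}
\item You localize $\lambda^\star$ in a window $(1\pm C\Delta)\lambda_0$ by a sign change of $\phi_2'$ and substitute back. The paper instead gets $\lambda^\star=O(\Delta)$ from the first-order condition and then sandwiches $\mathcal{G}_2$ using the quadratic evaluated at $\hat\lambda=\Delta/\mathbb{E}[X_2^2]$.
\item Your earlier sentence that concavity and $\phi_2'(0)>0$ imply the maximizer tends to $0$ is not valid on its own. It is your sign-change check that establishes it.
\item For $j\ge3$, the claim ``mean bounded away from $0$ implies $\mathcal{G}_j$ bounded below'' is true but needs an explicit bound. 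The paper uses $\mathcal{G}_j(F)\ge D_j$ from \eqref{eqn:categorical-pairwise-rate}. Alternatively, $\phi_j(\lambda)\ge\lambda(p_1-p_j)-\lambda^2$ for $\lambda\le1/2$ works.
\end{itemize}

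Where you go beyond the paper is the limit $c\to\infty$. The paper's appendix proof ends at \eqref{eqn:hard} and does not argue this claim, so your paragraph fills a real omission. One step is under-justified as written. Boundedness of $\mathbb{E}\log(1-\theta_j)$ does not by itself give uniform integrability. What you need is that $\psi(c(1-p_j))-\psi(c)\to\log(1-p_j)$, which the digamma formula does give. Together with convergence in distribution of the nonnegative variables $-\log(1-\theta_j)$, that yields uniform integrability.

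A shorter route avoids the endpoint $\lambda=1$ entirely:
\begin{itemize}
\item Upper bound: Jensen gives $\mathcal{G}_j(F)\le\log(1+p_1-p_j)$ for every $c$, exactly as in the proof of \eqref{eqn:sandwich}.
\item Lower bound: for any fixed $\lambda<1$, the integrand $\log(1+\lambda(\theta_1-\theta_j))$ is bounded and continuous on the simplex. Weak convergence $\Dir(c\boldsymbol{\pi})\Rightarrow\delta_{\boldsymbol{\pi}}$ then gives $\liminf_{c\to\infty}\mathcal{G}_j(F)\ge\log(1+\lambda(p_1-p_j))$.
\end{itemize}
Letting $\lambda\uparrow1$ and minimizing over $j$ yields $T_{\rm gb}(F)\to1/\log(1+\Delta)$. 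This needs no uniform-convergence argument and no Beta-marginal computation.
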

The following table compares the three feedback models on the same Dirichlet instances. Confidence feedback gives an intermediate gain, with a substantial gap from grey-box feedback at moderate $c$. Besides, the approximation $T_{\rm bb}(F)/(c+1)$ closely matches $T_{\rm gb}(F)$ for small $c$. As $c$ increases, however, the approximation tends to zero, while $T_{\rm gb}(F)$ approaches the limit $1/\log(1+\Delta)$.

\begin{table}[htbp]
\caption{Characteristic times for $F=\Dir(c\boldsymbol{\pi})$ with $\boldsymbol{\pi}=(0.4,0.3,0.2,0.1)$. For all $c$, $T_{\rm bb}(F)=139.52$ and $1/\log(1+\Delta) = 10.49$. The values of $T_{\rm cw}(F)$ and $T_{\rm gb}(F)$ are computed numerically through simulation. The last row is the fixed-$c$, small-gap approximation in \eqref{eqn:hard}.}
\begin{center}
\setlength{\tabcolsep}{3pt}
\label{tab:rates}
\small
\begin{tabular}{@{}cccccccccccc@{}}
\toprule
$c$ & $0.01$ & $0.1$ & $1$ & $2$ & $3$ & $5$ & $10$ & $20$ & $50$ & $100$ & $1000$ \\
\midrule
$T_{\rm cw}(F)$ & $139.52$ & $139.50$ & $138.22$ & $125.66$ & $109.15$ & $84.99$ & $55.76$ & $33.70$ & $15.62$ & $11.85$ & $10.76$ \\
$T_{\rm gb}(F)$ & $138.15$ & $126.95$ & $70.33$ & $47.24$ & $35.70$ & $24.17$ & $15.01$ & $12.34$ & $11.16$ & $10.82$ & $10.52$ \\
$T_{\rm bb}(F)/(c+1)$ & $138.14$ & $126.84$ & $69.76$ & $46.51$ & $34.88$ & $23.25$ & $12.68$ & $6.64$ & $2.74$ & $1.38$ & $0.14$ \\
\bottomrule
\end{tabular}
\end{center}
\end{table}

\section{The ASC-D Algorithm}\label{sec:alg}

\subsection{From the Rate to the Algorithm}\label{sec:idea}

The rate in Theorem \ref{thm:main} tells us which statistic to compute: Consider a gambler who starts with wealth $1$ and, on every trajectory $i$, bets a fraction $\lambda$ of his wealth on ``$a_k$ beats $a_j$'', receiving $1+\lambda\cdot(\theta_{i,k}-\theta_{i,j})$ per unit bet. After $n$ trajectories his wealth is
\begin{align}\label{eqn:wealth}
W_n^{(k,j)}(\lambda) := \prod_{i=1}^n\big(1+\lambda\cdot(\theta_{i,k}-\theta_{i,j})\big), \qquad \lambda\in[0,1),
\end{align}
which is nonnegative because $\theta_{i,k}-\theta_{i,j}\geq-1$, and $\mathcal{G}_j(F)$ is the best expected log-wealth per trajectory of the bet on ``$a_1$ beats $a_j$''. The process has the two properties a stopping statistic needs.

\paragraph{Validity at Stopping Time} If $a_k$ does not beat $a_j$, i.e., $\mathbb{E}_F[\theta_k-\theta_j]\leq0$, then $\mathbb{E}_F[1+\lambda(\theta_{i,k}-\theta_{i,j})]\leq1$, so \eqref{eqn:wealth} is a nonnegative supermartingale with initial value $1$, and Ville's inequality gives
$$
\mathbb{P}_F\Big(\sup_{n\geq1}W_n^{(k,j)}(\lambda)\geq\dfrac{1}{\alpha}\Big)\leq\alpha.
$$
Wealth above $1/\alpha$ is therefore evidence at level $\alpha$ that $a_k$ beats $a_j$, and the guarantee holds uniformly over time, which is what a data-dependent stopping rule requires.

\paragraph{Growth at the Optimal Rate} If $a_1$ beats $a_j$, the law of large numbers gives $\frac{1}{n}\log W_n^{(1,j)}(\lambda)\to\mathbb{E}_F[\log(1+\lambda(\theta_1-\theta_j))]$, so at the optimal fraction the wealth grows at rate $\mathcal{G}_j(F)$ and crosses $1/\delta$ after $\log(1/\delta)/\mathcal{G}_j(F)$ trajectories, as illustrated by Theorem \ref{thm:main}.

However, we note that two quantities in this argument are unknown to the procedure: the optimal fraction $\lambda$ and the modal answer itself.

\paragraph{Unknown Optimal Fraction} The optimal fraction depends on $F$. As $F$ is itself unknown, we consider to average the wealth over a finite grid $\Lambda\subset[0,1)$,
\begin{align}\label{eqn:mixture}
\overline W_n^{(k,j)} := \dfrac{1}{|\Lambda|}\sum_{\lambda\in\Lambda}W_n^{(k,j)}(\lambda).
\end{align}
A convex combination of supermartingales is a supermartingale, so validity is preserved, and $\log\overline W_n^{(k,j)}\geq\max_{\lambda\in\Lambda}\log W_n^{(k,j)}(\lambda)-\log|\Lambda|$, so the averaged wealth grows at the best rate on the grid, $\max_{\lambda\in\Lambda}\mathbb{E}_F[\log(1+\lambda(\theta_1-\theta_j))]$, only at the price of a constant that does not affect the rate. The choice of the grid will be discuss in the following subsection.

\paragraph{Unknown Mode Answer} As the algorithm cannot know in advance which answer to bet on (i.e., which answer is the modal answer), so it calculates \eqref{eqn:mixture} for every ordered pair $(k,j)$. The current leader is the answer with the largest cumulative mass $S_{n,k}:=\sum_{i\leq n}\theta_{i,k}$, the grey-box counterpart of the plurality vote, and the procedure stops once the leader's averaged wealth against every challenger exceeds $(K-1)/\delta$. An error means that some $a_k\neq a_1$ is returned, which requires $\overline W_n^{(k,1)}\geq(K-1)/\delta$ for a pair in which $a_k$ does not beat $a_1$. As each of these $K-1$ events has probability at most $\delta/(K-1)$ by Ville's inequality, and a union bound to the $K-1$ components then gives the error probability $\delta$.

Algorithm~\ref{alg:ascd} provides the detailed procedure. If we use only the sampled answer from each trajectory instead of its full answer distribution, ASC-D reduces to the known-$K$ adaptation of CITE~\citep{ota2026cite}. Using full answer distributions allows ASC-D to also use the probabilities assigned to answers that were not sampled. Our later analysis will quantify this information gain and shows that ASC-D attains the optimal characteristic time $T_{\rm gb}(F)$ with a properly designed betting grid.

\begin{algorithm}[htbp]
\caption{ASC-D: adaptive self-consistency with distribution-valued feedback}\label{alg:ascd}
\begin{algorithmic}[1]
\Require prompt $x$, candidate answers $\{a_1,\cdots,a_K\}$, confidence $\delta$, grid $\Lambda$.
\State $\ell_{k,j}(\lambda)\gets0$ for all $k\neq j$ and $\lambda\in\Lambda$; \ $S_k\gets0$ for all $k$
\For{$n=1,2,\cdots$}
    \State sample a trajectory $r_n\sim\mathbb{P}(\cdot\mid x)$ and read off $\boldsymbol{\theta}_n=\mathbb{P}(\cdot\mid r_n,x)$ over the candidate set
    \State $S_k\gets S_k+\theta_{n,k}$ for all $k$ \Comment{cumulative mass of each answer}
    \State $\ell_{k,j}(\lambda)\gets\ell_{k,j}(\lambda)+\log\big(1+\lambda\cdot(\theta_{n,k}-\theta_{n,j})\big)$ for all $k\neq j$ and $\lambda\in\Lambda$
    \State $\hat k\gets\argmax_k S_k$ \Comment{current leader}
    \State $\log\overline W_j\gets\log\frac{1}{|\Lambda|}\sum_{\lambda\in\Lambda}\exp\big(\ell_{\hat k,j}(\lambda)\big)$ for all $j\neq\hat k$ \Comment{averaged against each challenger}
    \If{$\min_{j\neq\hat k}\log\overline W_j\geq\log\frac{K-1}{\delta}$}
        \State \Return $a_{\hat k}$
    \EndIf
\EndFor
\end{algorithmic}
\end{algorithm}

\subsection{Performance Guarantees}

\paragraph{Design of the Grid}  The grid $\Lambda$ should approximate each challenger's optimal betting fraction $\lambda_j^\star$ in \eqref{eqn:main}. Two insights guide its design (see Appendix \ref{App:main} for details). \emph{(i) The middle can be coarse};  \emph{(ii) Both ends should be dense}. We therefore utilize the geometric grid
\begin{align}\label{eqn:grid}
\Lambda_{r,m} := \big\{r^{-i}:\ i=1,\cdots,m\big\}\cup\big\{1-r^{-i}:\ i=1,\cdots,m\big\}, \qquad r>1,\ m\in\mathbb{N}.
\end{align}

\paragraph{Asymptotic Upper Bound of ASC-D} Define
\begin{align}\label{eqn:gridrate}
\mathcal{G}^{\Lambda}_j(F) := \max_{\lambda\in\Lambda}\mathbb{E}_F\big[\log\big(1+\lambda\cdot(\theta_1-\theta_j)\big)\big], \qquad T_{\rm gb}^{\Lambda}(F) := \dfrac{1}{\min_{j\neq1}\mathcal{G}^{\Lambda}_j(F)},
\end{align}
The following theorem provides the performance guarantees of Algorithm \ref{alg:ascd}.

\begin{theorem}[Performance Guarantees of ASC-D]\label{thm:ascd}
Let $(\tau^{\rm D},\hat a_{\tau^{\rm D}})$ be the stopping time and the output of Algorithm \ref{alg:ascd} with grid $\Lambda \subset[0,1)$, then for every $\delta\in(0,1)$, $\mathbb{P}_F(\tau^{\rm D}<\infty,\ \hat a_{\tau^{\rm D}}\neq a_1)\leq\delta$.
\begin{itemize}[noitemsep, topsep=0pt, left=1em]
\item[(i)] Let $\Lambda=\Lambda_{r,m}$ with $r^{-m}\leq\min_{j\neq1}\lambda^\star_j$. We have $\limsup_{\delta \to 0} \frac{\mathbb{E}_F[\tau^{\rm D}]}{\log(1/\delta)}\leq T_{\rm gb}^{\Lambda}(F)\leq r\cdot T_{\rm gb}(F)$.

\item[(ii)] Let $\Lambda=\Lambda_{r_\delta,m_\delta}$ with $r_\delta := 1+\big(\log(1/\delta)\big)^{-1/4}$, and $m_\delta := \big \lceil(\log(1/\delta))^{1/2}\big \rceil$. We have
\begin{align}\label{eqn:ascd-exact}
\limsup_{\delta\to0}\dfrac{\mathbb{E}_F[\tau^{\rm D}]}{\log(1/\delta)}\leq T_{\rm gb}(F).
\end{align}
\end{itemize}
\end{theorem}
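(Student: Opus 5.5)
The proof splits into three parts: validity, the fixed-grid rate (i), and the shrinking-grid rate (ii). Throughout we assume $\mathcal{G}_j(F)>0$ for all $j\neq 1$, which holds because $\mathbb{E}_F[X_j]=p_1-p_j>0$.

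\emph{Validity.} An error at a finite time means the algorithm returned some $a_k\neq a_1$. At that time $\min_{j\neq k}\overline W_n^{(k,j)}\geq (K-1)/\delta$, and in particular $\overline W_n^{(k,1)}\geq(K-1)/\delta$. For each $k\neq1$ we have $\mathbb{E}_F[\theta_k-\theta_1]=p_k-p_1<0$. Hence every $W_n^{(k,1)}(\lambda)$ with $\lambda\in[0,1)$ is a nonnegative supermartingale started at $1$, and so is their average $\overline W_n^{(k,1)}$. Ville's inequality bounds $\mathbb{P}_F(\sup_n\overline W_n^{(k,1)}\geq (K-1)/\delta)$ by $\delta/(K-1)$. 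A union bound over the $K-1$ indices $k\neq 1$ then gives the claim for any grid.

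\emph{Part (i).} We first show $\limsup_{\delta\to0}\mathbb{E}_F[\tau^{\rm D}]/\log(1/\delta)\le T^\Lambda_{\rm gb}(F)$ for a fixed finite grid, following the standard argument of \citet{garivier2016optimal}. Fix $\epsilon>0$ and, for each $j$, let $\lambda_j^\Lambda$ be the grid point attaining $\mathcal{G}^\Lambda_j$. The summands $\log(1+\lambda(\theta_1-\theta_j))$ are bounded, since $\lambda\le\max\Lambda<1$ gives $\log(1-\max\Lambda)\le\cdot\le\log 2$. Hoeffding's inequality therefore shows that the event
\[
\mathcal{E}_n=\Big\{S_{n,1}>\max_{k\neq1}S_{n,k}\text{ and }\textstyle\frac1n\ell_{1,j}(\lambda_j^\Lambda)\geq \mathcal{G}^\Lambda_j-\epsilon\ \forall j\Big\}
\]
fails with probability at most $C e^{-cn}$, which is summable. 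On $\mathcal{E}_n$ the leader is $a_1$ and $\log\overline W_n^{(1,j)}\ge n(\mathcal{G}_j^\Lambda-\epsilon)-\log|\Lambda|$. Thus stopping occurs by time
\[
n_\delta=\Big\lceil\big(\log\tfrac{K-1}{\delta}+\log|\Lambda|\big)\big/(\min_j\mathcal{G}^\Lambda_j-\epsilon)\Big\rceil .
\]
Using $\mathbb{E}[\tau]\le n_\delta+\sum_{n\ge n_\delta}\mathbb{P}(\mathcal{E}_n^c)$ and letting $\delta\to0$ and then $\epsilon\to0$ gives the bound.

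Next we show $\mathcal{G}^\Lambda_j\ge \mathcal{G}_j/r$. The function $g(\lambda)=\mathbb{E}\log(1+\lambda X_j)$ is concave with $g(0)=0$, so $g(s\lambda)\ge s\,g(\lambda)$ for $s\in[0,1]$. If $\lambda_j^\star\le 1/r$, there is a grid point $r^{-i}\in[\lambda_j^\star/r,\lambda_j^\star]$, using $r^{-m}\le\lambda_j^\star$; taking $s=r^{-i}/\lambda_j^\star\ge 1/r$ gives the ratio $1/r$. If instead $\lambda^\star_j>1/r$ (or $\lambda_j^\star=1$), we use the points $1-r^{-i}$. Here we need an argument near $1$: take the point $\lambda'=1-r^{-i}$ just below $\lambda^\star_j$. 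Since $r^{-i}\le r(1-\lambda_j^\star)$ is small, concavity with $s=\lambda'/\lambda_j^\star$ still gives $s\ge 1-r^{-i}/\lambda_j^\star$. I expect this endpoint case to be the delicate part. The case $\lambda_j^\star=1$ uses monotone convergence of $g(1-r^{-i})\to g(1)$, which may be infinite. Possibly the clean bound $1/r$ needs $\max_i(1-r^{-i})$ dense enough, which is exactly what $m$ controls. Tracking the point between $1/r$ and $1-1/r$, where the two halves of the grid meet, is where I would first check that the gap ratio stays at most $r$.

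\emph{Part (ii).} Now the grid varies with $\delta$. We have $|\Lambda_\delta|=2m_\delta=O(\sqrt{\log(1/\delta)})$, so $\log|\Lambda_\delta|=o(\log(1/\delta))$. Moreover $r_\delta^{-m_\delta}=\exp(-m_\delta\log r_\delta)\approx\exp(-(\log1/\delta)^{1/4})\to0$, so eventually $r_\delta^{-m_\delta}\le\min_j\lambda_j^\star$; the same decay makes the upper points dense near $1$. The part (i) argument then gives a grid point with $g_j\ge\mathcal{G}_j/r_\delta\to\mathcal{G}_j$.

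The main obstacle is that the concentration step is no longer uniform in $\delta$. First, the summands near $\lambda\to1$ are unbounded below, with $\log(1-\lambda)\approx-(\log1/\delta)^{1/4}$. I would use a Bernstein-type bound, or simply Chebyshev or Hoeffding with range $O((\log1/\delta)^{1/4})$, at the time scale $n\asymp\log(1/\delta)$. This still yields deviation probabilities summing to $o(\log(1/\delta))$. Second, a union bound over the $O(\sqrt{\log 1/\delta})$ grid points is needed for the leader event, but that event does not depend on $\lambda$. Controlling the tail $\sum_{n\ge n_\delta}\mathbb{P}(\mathcal{E}_n^c)$ with these $\delta$-dependent ranges is the step I expect to require the most care.
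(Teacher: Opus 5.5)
Your overall structure matches the paper's proof. Validity uses Ville's inequality on $\overline W^{(k,1)}$ with a union bound over $k\neq1$. The fixed-grid rate uses Hoeffding on the leader event and on the log-wealth at one grid maximizer per challenger, followed by $\mathbb{E}[\tau]\le n_0+\sum_{n\ge n_0}\mathbb{P}(\cdot)$. For (ii), $\log|\Lambda_\delta|=O(\log\ell)$ with $\ell=\log(1/\delta)$, and Hoeffding with range $O(\ell^{1/4})$ gives an exponent $\gamma\gtrsim\ell^{-1/2}$, hence a tail sum $O(\ell^{1/2})=o(\ell)$. That last computation is exactly the paper's, so the step you flag as needing the most care is routine. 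The union bound over grid points you mention is unnecessary, since for each $\delta$ you fix one deterministic grid point per challenger.

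The genuine gap is the grid-approximation inequality $\mathcal{G}^\Lambda_j(F)\ge\mathcal{G}_j(F)/r$ in the case $\lambda^\star_j>1/r$. The second inequality of (i), the positivity of $\min_j\mathcal{G}^\Lambda_j(F)$ that your choice of $n_\delta$ needs, and the uniform bound $g_{\Lambda_\delta}\ge 1/(2T_{\rm gb}(F))$ used in (ii) all rest on it. You leave this case unresolved, and your proposed route through the upper points $1-r^{-i}$ fails under the theorem's hypothesis. The nearest upper point below $\lambda^\star_j$ need not lie in $[\lambda^\star_j/r,\lambda^\star_j]$. For example, take $r=3/2$ and $m=1$, so the grid is $\{2/3,1/3\}$. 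Take a two-point margin with $\lambda^\star_j=0.9$, which satisfies $r^{-m}\le\lambda^\star_j$. Your point $1/3$ then retains only a fraction $0.37<2/3$ of the optimal rate. The hypothesis $r^{-m}\le\min_j\lambda^\star_j$ constrains only the lower half of the grid, so your guess that some density condition on the upper half is needed points the wrong way. The missing observation is that the single lower point $r^{-1}$ always works. Since $\lambda^\star_j\le1$, whenever $\lambda^\star_j\ge r^{-1}$ we have $r^{-1}\in[\lambda^\star_j/r,\lambda^\star_j]$. Concavity with $\phi_j(0)=0$ then gives $\phi_j(r^{-1})\ge(r^{-1}/\lambda^\star_j)\,\mathcal{G}_j(F)\ge\mathcal{G}_j(F)/r$. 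Combined with your argument for $\lambda^\star_j<r^{-1}$, which is the paper's, this closes the lemma; the points $1-r^{-i}$ play no role in the proof. Likewise, $\lambda^\star_j=1$ needs no limiting argument. We have $\phi_j(1)\le\log2$, and if $1$ is a maximizer then $\phi_j(1)=\mathcal{G}_j(F)\in(0,\log2]$ is finite.
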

Together with Theorem \ref{thm:main}, \eqref{eqn:ascd-exact} gives, for every $F$ with a unique mode, $\lim_{\delta\to0}\frac{\mathbb{E}_F[\tau^{\rm D}]}{\log(1/\delta)}=T_{\rm gb}(F)$. Thus, ASC-D can attain the optimal characteristic time of distribution-valued feedback exactly.

\section{Experiments}\label{sec:experiments}

We evaluate whether distribution-valued feedback improves the sample efficiency of mode identification in both synthetic and real-world settings. In both settings, all methods target the same population mode and are evaluated on the same trajectories. We test sample efficiency in two ways: the number of generations needed to certify the mode at a fixed confidence level (i.e., the stopping time), and the probability of correct certification of the mode within a fixed maximum trajectory budget $N$ (i.e., $\Prob(\hat a_\tau=a_1,\tau\le N)$). Section~\ref{sec:synthetic} studies the effect of the feedback type in synthetic datasets, and Section~\ref{sec:real-world} tests whether the same gains persist on the probability vectors generated from live LLM calls.

Code, stored probability vectors and trajectories, and experiment scripts are available at \url{https://anonymous.4open.science/r/ICLR2027-ASCD-7B09/}.

\subsection{Controlled Comparison of Feedback Types}\label{sec:synthetic}

We set $K=4$ and consider
$\boldsymbol{\pi}_{\Delta}=(0.35+\Delta/2,0.35-\Delta/2,0.15,0.15)$, where $0 < \Delta \leq 0.4$ is the probability gap between the top-two answers. At each time $t$, we draw $Z_t\sim\operatorname{Cat}(\boldsymbol{\pi}_{\Delta})$, form $\boldsymbol{\theta}_t=\frac{1}{2}\boldsymbol e_{Z_t} + \frac{1}{2}\boldsymbol{\pi}_{\Delta}$, and sample $Y_t\mid\boldsymbol{\theta}_t\sim\operatorname{Cat}(\boldsymbol{\theta}_t)$. This construction gives
$\mathbb{E}[\boldsymbol{\theta}_t]  = \boldsymbol{\pi}_{\Delta}$ and $Y_t \sim \operatorname{Cat}(\boldsymbol{\pi}_{\Delta})$. We compare the ASC-D with observation of $\boldsymbol{\theta}_t$, a confidence-weighted variant using $\theta_{t,Y_t} \cdot \boldsymbol e_{Y_t}$, and CITE-4~\citep{ota2026cite} using $\boldsymbol e_{Y_t}$ and its native certification rule. All procedures use the same 15-point grid, and we run 10,000 paired replays per condition. %

\begin{table}[htbp]
    \caption{Controlled comparison of feedback types.}
    \label{tab:synthetic-comparison}
    \begin{center}
    \small
    \renewcommand{\arraystretch}{1.05}

    \begin{minipage}[t]{0.49\linewidth}
        \centering
        \textbf{(a) Mean stopping time under $1-\delta=0.95$}\\[3pt]
            \begin{tabular}{@{}cccc@{}}
                \toprule
                $\Delta$ & ASC-D  & Conf.-wt. & CITE-4 \\
                \midrule
                0.10 & \textbf{188.42} & 444.57 & 907.72 \\
                0.20 & \textbf{46.20}  & 99.21  & 231.54 \\
                0.30 & \textbf{26.79} & 46.58  & 109.82 \\
                0.40 & \textbf{20.62}   & 32.12  & 71.72  \\
                \bottomrule
            \end{tabular}%
    \end{minipage}\hfill
    \begin{minipage}[t]{0.49\linewidth}
        \centering
        \textbf{(b) Correct-certification rate under $\Delta=0.1$}\\[3pt]
            \begin{tabular}{@{}cccc@{}}
                \toprule
                $1-\delta$ & ASC-D  & Conf.-wt. & CITE-4 \\
                \midrule
                0.90  & \textbf{98.44\%} & 71.37\% & 30.86\% \\
                0.95  & \textbf{97.49\%}  & 62.49\% & 24.22\% \\
                0.975 & \textbf{96.40\%} & 54.83\% & 18.38\% \\
                0.99  & \textbf{94.23\%} & 45.11\% & 12.73\% \\
                \bottomrule
            \end{tabular}%
    \end{minipage}
    \end{center}
\end{table}

Table~\ref{tab:synthetic-comparison}(a) shows that under the $95\%$ confidence level, ASC-D uses $35.8$--$57.6\%$ fewer than the confidence-weighted method, and $71.3$--$80.1\%$ fewer than CITE-4, with larger savings at smaller gaps. For the harder setting $\Delta=0.1$ with a maximum trajectory budget of $N=512$, Table~\ref{tab:synthetic-comparison}(b) shows that ASC-D correctly certifies the mode in $94.23$--$98.44\%$ of trials across the tested confidence levels, substantially more often than the reduced-feedback methods. Note that the correct-certification rate decreases as the confidence requirement $1-\delta$ becomes stricter, since the fixed budget constraint leaves less time to accumulate the required evidence. Together, these results establish ASC-D as the most sample-efficient among the evaluated methods. Appendix~\ref{app:synthetic-additional} further validates ASC-D's characteristic-time scaling and the concentration-dependent gain in Proposition~\ref{prop:hard}.

\subsection{Real-World Experiments}\label{sec:real-world}

\paragraph{Exp. 1: Required number of trajectory generations across modal gaps.}
For each of 120 MMLU-Redux questions, we use 256 Qwen3-1.7B trajectories and rescale their stored answer logits with $T_a\in\{4,8,16,32\}$, where larger $T_a$ flattens the vectors and narrows the modal gap. At each temperature, the rescaled vectors define the empirical population $\widehat F = 256^{-1} \sum_{i=1}^{256}\delta_{\boldsymbol\theta_i}$, with mean $\widehat{\boldsymbol\pi}=256^{-1}\sum_{i=1}^{256}\boldsymbol\theta_i$, mode index $a^\star = \arg\max_k\widehat\pi_k$, and modal gap $\Delta=\widehat\pi_{a^\star}-\max_{j\neq a^\star}\widehat\pi_j$. We then run 500 paired replays by sampling uniformly from $\widehat F$. ASC-D observes $\boldsymbol\theta_t$; its matched one-hot ablation observes $\boldsymbol e_{Y_t}$ with $Y_t \mid \boldsymbol\theta_t\sim\operatorname{Cat}(\boldsymbol\theta_t)$; CITE-4 and PPR-1v1~\citep{jain2022pac} observe the same $Y_t$. The number of trajectory generation (Num. Gen.) is the stopping time for each method under confidence level $95\%$, and answer accuracy (Ans. Acc.) is evaluated against the correct answer. We observe benefits of our ASC-D over other benchmarks among all the 120 questions. Specifically, we focus on the 90 questions that retain the same ordered top-two answers across temperatures and for which every method attains the 95\% success target within the maximum budget of 32,768 trajectories. Table \ref{tab:temperature-gap-scaling} reports the corresponding results on these 90 questions.

\begin{table}[htbp]
    \caption{Number of trajectory generations required for $95\%$ confidence and the answer accuracy on the 90 common questions. Here, $\Delta_{\rm geo}$ denotes the geometric mean of the modal gap.}
    \label{tab:temperature-gap-scaling}
    \begin{center}
    \small
    \setlength{\tabcolsep}{3pt}
    \renewcommand{\arraystretch}{1.08}
    \begin{tabular}{@{}cc*{4}{cc}@{}}
        \toprule
        & & \multicolumn{2}{c}{ASC-D}
          & \multicolumn{2}{c}{ASC-D (one-hot)}
          & \multicolumn{2}{c}{PPR-1v1}
          & \multicolumn{2}{c}{CITE-4} \\
        \cmidrule(lr){3-4}
        \cmidrule(lr){5-6}
        \cmidrule(lr){7-8}
        \cmidrule(lr){9-10}
        {\footnotesize $T_a$} & {\footnotesize $\Delta_{\mathrm{geo}}$}
        & {\footnotesize Num.\ Gen.} & {\footnotesize Ans.\ Acc.}
        & {\footnotesize Num.\ Gen.} & {\footnotesize Ans.\ Acc.}
        & {\footnotesize Num.\ Gen.} & {\footnotesize Ans.\ Acc.}
        & {\footnotesize Num.\ Gen.} & {\footnotesize Ans.\ Acc.} \\
        \midrule
        4  & 0.61
        & \textbf{24.67} & \textbf{72.51\%}
        & 41.82   & 71.83\%
        & 46.05   & 71.65\%
        & 58.19   & 71.70\% \\
        8  & 0.42
        & \textbf{27.82} & \textbf{72.40\%}
        & 90.73   & 70.97\%
        & 99.21   & 70.92\%
        & 113.92  & 70.94\% \\
        16 & 0.21
        & \textbf{42.17} & \textbf{72.36\%}
        & 346.55  & 70.78\%
        & 380.76  & 70.77\%
        & 403.87  & 70.78\% \\
        32 & 0.10
        & \textbf{76.87} & \textbf{72.16\%}
        & 1593.49 & 70.73\%
        & 1732.97 & 70.74\%
        & 1711.74 & 70.74\% \\
        \bottomrule
    \end{tabular}
    \end{center}
\end{table}

The results yield two main findings. First, as $T_a$ increases from $4$ to $32$, the geometric-mean gap decreases from $0.610$ to $0.098$, and ASC-D's advantage grows. It uses $41.0$--$95.2\%$ fewer trajectories than its matched one-hot ablation, $46.4$--$95.6\%$ fewer than PPR-1v1, and $57.6$--$95.5\%$ fewer than CITE-4. Because full-vector and one-hot ASC-D use the same stopping rule and differ only in the observed feedback, their comparison isolates the benefit of distribution-valued feedback. These savings come without a loss in answer accuracy.

Second, regressing each question's log required budget on $\log(1/\Delta)$ gives a median slope of $0.766$ for ASC-D, versus $1.969$, $1.966$, and $1.854$ for its one-hot ablation, PPR-1v1, and CITE-4, respectively. This finite-range comparison supports the predicted near-linear rather than quadratic dependence on the inverse gap, and that the ASC-D slope below one is not an asymptotic claim. Appendix~\ref{app:real-world-additional} reports bootstrap confidence intervals and the corresponding log--log plot.

\paragraph{Exp. 2: Correct certification within a fixed budget.}
At the natural answer temperature $T_a=1$, we fix the maximum budget at $N=128$ and evaluate the same 90 MMLU-Redux questions used in Exp.~1 across Qwen3-1.7B, Llama-3.2-1B, and Llama-3.1-8B. For each model and question, the first 128 probability vectors define $\widehat F$, $\widehat{\boldsymbol\pi}$, and $a^\star$ as in Exp. 1. We run 200 paired replays at $1-\delta\in\{0.95,0.975,0.99\}$. At each step, we sample $\boldsymbol\theta_t$ uniformly from $\widehat F$ and draw $Y_t\mid\boldsymbol\theta_t\sim\operatorname{Cat}(\boldsymbol\theta_t)$. ASC-D observes $\boldsymbol\theta_t$, while CITE-4 and PPR-1v1 observe the matched draw $Y_t$. A replay succeeds only if the method stops within the fixed maximum budget and returns $a^\star$. %

\begin{table}[htbp]
    \caption{Correct-certification rate within $N=128$ trajectories on the 90 common questions.}
    \label{tab:cross-model-certification}
    \begin{center}
    \setlength{\tabcolsep}{7pt}
    \renewcommand{\arraystretch}{1.08}
    \begin{tabular}{llccc}
        \toprule
        Model & Method
        & $1-\delta=0.95$
        & $1-\delta=0.975$
        & $1-\delta=0.99$ \\
        \midrule
        Llama-3.2-1B
            & ASC-D   & \textbf{56.57\%} & \textbf{53.38\%} & \textbf{49.83\%} \\
            & CITE-4  & 37.51\% & 35.21\% & 32.93\% \\
            & PPR-1v1 & 36.11\% & 34.14\% & 31.94\% \\
        \midrule
        Llama-3.1-8B
            & ASC-D   & \textbf{80.62\%} & \textbf{78.84\%} & \textbf{76.44\%} \\
            & CITE-4  & 71.69\% & 69.48\% & 66.94\% \\
            & PPR-1v1 & 70.34\% & 68.34\% & 66.01\% \\
        \midrule
        Qwen3-1.7B
            & ASC-D   & \textbf{90.75\%} & \textbf{89.83\%} & \textbf{88.77\%} \\
            & CITE-4  & 89.29\% & 88.39\% & 87.44\% \\
            & PPR-1v1 & 88.69\% & 87.97\% & 87.03\% \\
        \bottomrule
    \end{tabular}
    \end{center}
\end{table}

Table~\ref{tab:cross-model-certification} shows that ASC-D achieves the highest correct-certification rate for every model and confidence level. Its advantage is largest on Llama-3.2-1B, where it exceeds CITE-4 and PPR-1v1 by up to $19.06$ percentage points and $20.46$ percentage points, respectively, highlighting the value of distribution-valued feedback. %

\FloatBarrier
\section{Conclusion and Limitations}\label{sec:conclusion}

In this paper, we show that observing the LLM's probabilities for all candidate answers on each trajectory can reduce the number of trajectories needed to identify the LLM's most consistent answer. Our analysis characterizes the optimal asymptotic rate and proposes ASC-D, an algorithm that attains this rate with a suitable betting grid.

The limitations of this work are as follows: (i) ASC-D identifies the LLM's most consistent answer, which need not be the correct answer, and therefore inherits self-consistency's failure mode when the model is confidently wrong; (ii) the method requires log-probabilities over a finite candidate set, covering multiple-choice and short-answer tasks with specified candidates but not directly open-ended generation with an unbounded answer space; and (iii) our experiments are limited to small open-source models and multiple-choice benchmarks, with evaluation on larger models and open-ended reasoning tasks left for future work.

\FloatBarrier

\newpage
\IfFileExists{Arxiv-Template/ASC-D_arxiv.bib}{%
  \bibliography{Arxiv-Template/ASC-D_arxiv}%
}{%
  \bibliography{ASC-D_arxiv}%
}
\bibliographystyle{plainnat}

\newpage

\appendix

\section{Proof of Main Results}\label{App:main}

\paragraph{Characteristic Time for the Black-box Observation Model} The reciprocal of the black-box constant is the KL divergence from $\Cat(\pi)$ to the merge alternative,
$$
\dfrac{1}{T_{\rm bb}(F)} = p_1\log\dfrac{p_1}{\bar p}+p_2\log\dfrac{p_2}{\bar p} = g\Big(\bar p+\dfrac{\Delta}{2}\Big)+g\Big(\bar p-\dfrac{\Delta}{2}\Big), \qquad g(x):=x\log\dfrac{x}{\bar p}.
$$
Expanding $g$ around $\bar p$, we have $g(\bar p)=0$, $g'(\bar p)=1$ and $g''(\bar p)=1/\bar p$. The first-order terms of the two summands are $+\frac{\Delta}{2}$ and $-\frac{\Delta}{2}$ and cancel, so only the second-order terms remain:
$$
\dfrac{1}{T_{\rm bb}(F)} = 2\cdot\dfrac{1}{2}\,g''(\bar p)\Big(\dfrac{\Delta}{2}\Big)^2+O(\Delta^4) = \dfrac{\Delta^2}{4\bar p}+O(\Delta^4),
\qquad\text{hence}\qquad
T_{\rm bb}(F)=\dfrac{4\bar p}{\Delta^2}\cdot\big(1+O(\Delta^2)\big).
$$
The cancellation is the whole reason for the quadratic rate: the merge moves $p_1$ and $p_2$ by the same amount in opposite directions, so the divergence has no linear part in $\Delta$.

\begin{proof}[Proof of Theorem \ref{thm:main}]

For distribution $F$ with unique mode $a_1$, we define the alternative set $\Alt(F)$ as
$$
\Alt(F):=\{F'\in\mathcal{M}:\argmax_k\mathbb{E}_{F'}[\theta_k]\neq1\}=\bigcup_{j=2}^K\{F':\mathbb{E}_{F'}[X_j]\leq0\},
$$
i.e., $\Alt(F)$ stands for the set of distributions in $\mathcal{M}$ for which $a_1$ is not the unique mode. We let $\Alt_j(F) := \{F':\mathbb{E}_{F'}[X_j]\leq0\}$ for all $j \neq 1$. Thus $\Alt(F) = \cup_{j=2}^K \Alt_j(F)$.

First, take an alternative $F' \in \Alt(F)$, and let $E$ be the event that the procedure recommends $a_1$ as the unique mode. We have the procedure is $\delta$-PAC at both $F$ and the alternative $F'$, i.e.,
$$
\mathbb{P}_F(E) \geq 1-\delta, \qquad \mathbb{P}_{F'}(E) \leq \delta.
$$
Then, by applying Lemma 1 of \citet{kaufmann2016complexity}, we can conclude the lower bound of:
$$
\mathbb{E}_F[\tau] \cdot \KL(F\|F') \geq \KL (\operatorname{Bern}(\mathbb{P}_F(E) \| \mathbb{P}_{F'}(E))) \geq (1-2\delta)\cdot \log \dfrac{1-\delta}{\delta}.
$$
Therefore, taking the supremum over the alternative set, we obtain
\begin{align}\label{m1}
\liminf_{\delta\to0}\dfrac{\mathbb{E}_F[\tau]}{\log(1/\delta)} \geq \dfrac{1}{\inf_{F' \in \Alt(F)} \KL(F \| F')}.
\end{align}

Next, it suffices to calculate the denominator in (\ref{m1}). We have
\begin{align}\label{m2}
\begin{aligned}
\inf_{F' \in \Alt(F)} \KL(F \| F') = \min_{j \neq 1} \inf_{F': \mathbb{E}_{F'}[X_j] \leq 0} \KL(F \| F')
\end{aligned}
\end{align}
according to the definition of $\Alt(F)$. For a distribution $G$ on $[-1,1]$, we define
\begin{align}\label{eqn:Kinf}
\Kinf(G) := \inf\left\{\KL(G\|G'): {\rm supp}\,G'\subseteq[-1,1],\ \mathbb{E}_{G'}[X]\leq0\right\},
\end{align}
which represents the minimal KL divergence from the distribution $G$ to the set of candidate distributions $G'$ supported on $[-1,1]$ with a mean less than or equal to 0. According to Proposition~1 of \citet{honda2015non}, we have for any distribution $G$ on $[-1,1]$ with positive mean,
$$
\Kinf(G) = \max_{\lambda \in [0,1]} \mathbb{E}_G [\log (1+\lambda X)].
$$

Fix $j \neq 1$, and let $G_j$ be the law of $X_j$ under $F$. For every $F'\in \Alt_j(F)$, the law $G'_j$ of $X_j$ under $F'$ has nonpositive mean. Data processing inequality therefore implies
$$
\KL(F\|F') \geq \KL(G_j\|G'_j) \geq \Kinf(G_j).
$$
To obtain the reverse inequality, we construct an alternative $F'$ that differs from $F$ only in the distribution of $X_j$. Sampling from $F$ can be viewed as two steps: first draw $X_j\sim G_j$, and then draw $\theta$ from its conditional distribution under $F$ given $X_j$. Now we take any candidate law $G'$ in (\ref{eqn:Kinf}). Replace the first step by drawing $X_j\sim G'$, while leaving the second step unchanged. Let $F'$ denote the resulting distribution of $\theta$.

Under $F'$, the margin $X_j$ has exactly the law $G'$, whose mean is nonpositive. Thus $F'\in\Alt_j(F)$. Moreover, the conditional distribution used in the second step is the same under $F$ and $F'$, so it contributes no additional KL divergence. The chain rule for relative entropy therefore gives
$$
\KL(F\|F')=\KL(G_j\|G').
$$
Hence every candidate $G'$ in the one-dimensional problem gives a candidate $F'$ in the original problem with the same KL divergence. Taking the infimum over $G'$ proves the reverse inequality:
$$
\inf_{F'\in\Alt_j(F)}\KL(F\|F')
\leq\inf_{G':\mathbb{E}_{G'}[X]\leq0}\KL(G_j\|G')
=\Kinf(G_j).
$$
Combining these two directions of the inequalities yields the equation of
$$
\inf_{F'\in \Alt_j(F)}\KL(F\|F')
=\Kinf(G_j) =\max_{\lambda\in[0,1]}\mathbb{E}_F[\log(1+\lambda X_j)] =\mathcal{G}_j(F).
$$
Taking the result back to (\ref{m1}) and (\ref{m2}), we obtain the main statement of the theorem.

We then focus on the inequalities of
\begin{align}\label{sandwich}
\dfrac{1}{T_{\rm bb}(F)} \leq \min_{j \neq 1} \mathcal{G}_j(F)  \leq \log(1+\Delta).
\end{align}
For the RHS of (\ref{sandwich}): for any $F$ and $\lambda \in [0,1]$, since $\log x$ is concave in $x$, by applying the Jensen's inequality,
$$
\mathbb{E}_F [\log (1+\lambda X_j)] \leq \log (1+ \lambda \mathbb{E}_F[X_j]) = \log (1+\lambda\cdot (p_1-p_j)) \leq \log (1+p_1-p_j).
$$
Let $j=2$ and we obtain $\min_{j \neq 1} \mathcal{G}_j(F)  \leq \log(1+\Delta)$. For $F=\delta_{\boldsymbol{\pi}}$, every margin is the positive constant $p_1-p_j$, therefore $\mathcal{G}_j(\delta_{\boldsymbol{\pi}}) = \log(1+p_1-p_j)$ and thus $F=\delta_{\boldsymbol{\pi}}$ can achieve the upper bound.

For the LHS of (\ref{sandwich}): concavity of the logarithm and $\sum_k\theta_k=1$ give, for $0\leq\lambda<1$,
\begin{align*}
\log(1+\lambda(\theta_1-\theta_j))
&=\log\left(\theta_1(1+\lambda)+\theta_j(1-\lambda)
+\sum_{k\notin\{1,j\}}\theta_k\right)\\
&\geq\theta_1\log(1+\lambda)+\theta_j\log(1-\lambda).
\end{align*}
Taking expectations and optimizing over $\lambda$ yields
\begin{align}\label{eqn:categorical-pairwise-rate}
\mathcal{G}_j(F)
&\geq\sup_{0\leq\lambda<1}\{p_1\log(1+\lambda)+p_j\log(1-\lambda)\}\notag\\
&=D_j:=p_1\log\frac{2p_1}{p_1+p_j}
+p_j\log\frac{2p_j}{p_1+p_j},
\end{align}
with the maximizer $\lambda=(p_1-p_j)/(p_1+p_j)$. Because $p_j\leq p_2$ and $\log(1-\lambda)\leq0$, the supremum in (\ref{eqn:categorical-pairwise-rate}) is at least its value with $p_j$ replaced by $p_2$. Thus $D_j\geq D_2$ for every $j\neq 1$, and
$$
\min_{j\neq1}\mathcal{G}_j(F)\geq D_2=\frac{1}{T_{\rm bb}(F)}.
$$
Finally, under the vertex law $F_0=\sum_k p_k\delta_{\boldsymbol{e}_k}$, the margin $X_j$ equals $1$, $-1$, and $0$ with probabilities $p_1$, $p_j$, and $1-p_1-p_j$, respectively. Hence $\mathcal{G}_j(F_0)=D_j$, and thus the vertex law $F_0$ can achieve the lower bound.

\end{proof}

\begin{proof}[Proof of Proposition \ref{prop:cw-characteristic}]

Let $F\in\mathcal{M}$ have unique mode $a_1$, and recall that $Q_F$ is the distribution of the observed pair $(o,w)$. We use the same alternative sets as in the proof of Theorem \ref{thm:main}:
$$
\Alt_j(F):=\{F'\in\mathcal{M}:\mathbb{E}_{F'}[X_j]\leq0 \}, \qquad \Alt(F)=\bigcup_{j\neq1}\Alt_j(F),
$$
where $X_j=\theta_1-\theta_j$. Thus $\Alt(F)$ contains the distributions for which $a_1$ is not the unique mode.

First take an alternative $F'\in\Alt(F)$ with a unique mode different from $a_1$, and let $E$ be the event that the procedure recommends $a_1$. Since the procedure is $\delta$-PAC under both $F$ and $F'$,
$$
\mathbb{P}_F(E)\geq1-\delta, \qquad \mathbb{P}_{F'}(E)\leq \delta.
$$
The same change-of-measure argument used in the proof of Lemma 1 \citet{kaufmann2016complexity}, now applied to observations with laws $Q_F$ and $Q_{F'}$, gives,
$$
\mathbb{E}_F[\tau] \cdot \KL(Q_F\|Q_{F'}) \geq \KL (\operatorname{Bern}(\mathbb{P}_F(E) \| \mathbb{P}_{F'}(E))) \geq (1-2\delta)\cdot \log \dfrac{1-\delta}{\delta}.
$$
Similarly, taking the supremum over the alternative set, and letting $\delta \rightarrow 0$, we obtain
\begin{align*}
\liminf_{\delta\to0}\dfrac{\mathbb{E}_F[\tau]}{\log(1/\delta)} &\geq\frac{1}{\inf_{F'\in\Alt(F)}\KL(Q_F\|Q_{F'})}\\
&=\left[\min_{j\neq1}\inf_{F'\in\Alt_j(F)}\KL(Q_F\|Q_{F'})\right]^{-1} =T_{\rm cw}(F).
\end{align*}

Finally, for the sandwich bound $T_{\rm gb}(F)\leq T_{\rm cw}(F)\leq T_{\rm bb}(F)$: the pair $(o,w)$ can be generated from $\boldsymbol{\theta}$, and the sampled answer $o$ is obtained by dropping $w$. The data processing inequality therefore gives, for every $F'\in\Alt(F)$,
$$
\KL(F\|F')\geq\KL(Q_F\|Q_{F'}) \geq\KL\left(\Cat(\boldsymbol{\pi}) \| \Cat(\mathbb{E}_{F'}[\boldsymbol{\theta}])\right).
$$
Taking infima over the same alternative set, the first term gives $1/T_{\rm gb}(F)$ by Theorem~\ref{thm:main}. The last term gives $1/T_{\rm bb}(F)$ in \eqref{eqn:shah}. Thus we obtain the desired result. Besides, we also note that this asymptotic lower bound is achievable through some properly designed algorithm (with similar idea to that of our ASC-D). We omit the details as this is not the main focus of the paper.
\end{proof}

\begin{proof}[Proof of Proposition \ref{prop:hard}]
Throughout, $F=\Dir(c\pi)$ and $\phi_j(\lambda):=\mathbb{E}_F[\log(1+\lambda X_j)]$, so that $\mathcal{G}_j(F)=\max_{\lambda\in[0,1]}\phi_j(\lambda)$.

\emph{Second moment of $X_2$.} By \eqref{eqn:dirvar},
$$
\Var(X_2)=\Var(\theta_1)+\Var(\theta_2)-2\Cov(\theta_1,\theta_2)=\dfrac{p_1(1-p_1)+p_2(1-p_2)+2p_1p_2}{c+1}=\dfrac{p_1+p_2-\Delta^2}{c+1},
$$
and since $\mathbb{E}[X_2]=\Delta$,
\begin{align}\label{eqn:X2sq}
\mathbb{E}[X_2^2]=\dfrac{2\bar p-\Delta^2}{c+1}+\Delta^2=\dfrac{2\bar p}{c+1}\cdot\big(1+O(\Delta^2)\big).
\end{align}

\emph{Expansion of $\mathcal{G}_2$.} The function $\phi_2$ is concave on $[0,1]$, with $\phi_2'(\lambda)=\mathbb{E}[X_2/(1+\lambda X_2)]$ for $0 \leq \lambda < 1$. Fix $\lambda_0 \in(0,1)$. Since $1+\lambda_0X_2\leq 2$,
$$
\phi_2'(\lambda_0)=\mathbb{E}\Big[\dfrac{X_2}{1+\lambda_0X_2}\Big] = \mathbb{E}\Big[X_2-\dfrac{\lambda_0 X_2^2}{1+\lambda_0X_2}\Big]\leq\Delta-\dfrac{\lambda_0}{2}\,\mathbb{E}[X_2^2],
$$
which is negative for small $\Delta$ by \eqref{eqn:X2sq}. Since $\phi_2'(0)=\Delta > 0$, the maximizer $\lambda^\star$ lies in $(0,\lambda_0)$, and we have:
$$
\Delta=\lambda^\star\mathbb{E}\left[\dfrac{X_2^2}{1+\lambda^\star X_2}\right] \geq\dfrac{\lambda^\star}{2}\mathbb{E}[X_2^2],
$$
so $\lambda^\star\leq2\Delta/\mathbb{E}[X_2^2]=O(\Delta)$.

On this interval, $\log(1+u)=u-\frac{u^2}{2}+R(u)$ with $|R(u)|\leq C_0|u|^3$ for some constant $C_0 >0$ for any $|u|\leq\lambda_0$, and $|X_2|^3\leq X_2^2$ since $|X_2|\leq 1$, so with $u=\lambda X_2$,
\begin{align}\label{eqn:phi-expansion}
\phi_2(\lambda)=\lambda\Delta-\dfrac{\lambda^2}{2}\,\mathbb{E}[X_2^2]+r(\lambda), \qquad |r(\lambda)|\leq C_0\lambda^3\,\mathbb{E}[X_2^2].
\end{align}
The quadratic part is maximized at $\hat\lambda:=\Delta/\mathbb{E}[X_2^2]=\frac{(c+1)\Delta}{2\bar p}\cdot(1+O(\Delta^2))$, which lies in $[0,\lambda_0]$ for small $\Delta$, with value $\frac{\Delta^2}{2\mathbb{E}[X_2^2]}=\frac{(c+1)\Delta^2}{4\bar p}\cdot(1+O(\Delta^2))$. Evaluating \eqref{eqn:phi-expansion} at $\hat\lambda$ gives $\mathcal{G}_2(F)\geq\frac{(c+1)\Delta^2}{4\bar p}(1+O(\Delta^2))-C_0\hat\lambda^3\mathbb{E}[X_2^2]$, and the remainder is $O(\Delta^3)$ since $\hat\lambda=O(\Delta)$. Conversely, the quadratic part at $\lambda^\star$ is at most its value at $\hat\lambda$, and $r(\lambda^\star)=O(\Delta^3)$ because $\lambda^\star=O(\Delta)$. Thus $\mathcal{G}_2(F)\leq\frac{\Delta^2}{2\mathbb{E}[X_2^2]}+O(\Delta^3)$. Therefore
\begin{align}\label{eqn:G2-hard}
\mathcal{G}_2(F)=\dfrac{(c+1)\Delta^2}{4\bar p}\cdot\big(1+O(\Delta)\big).
\end{align}
For the maximizer, $\phi_2'(\lambda)=\mathbb{E}[X_2]-\lambda\mathbb{E}[X_2^2]+O(\lambda^2\mathbb{E}[X_2^2])$ for $\lambda\leq\lambda_0$, so the stationary condition $\phi_2'(\lambda^\star)=0$ gives $\lambda^\star=\frac{\Delta}{\mathbb{E}[X_2^2]}\cdot(1+O(\lambda^\star))=\frac{(c+1)\Delta}{2\bar p}\cdot(1+O(\Delta))$.

Along this limit, $p_1=\bar p+\Delta/2$ and $p_2=\bar p-\Delta/2$. For $K>2$, the fixed probabilities satisfy $p_j<\bar p$ for every $j\geq3$, since $p_j\leq p_2$. The pairwise lower bound $\mathcal{G}_j(F)\geq D_j$ in \eqref{eqn:categorical-pairwise-rate} then gives $\liminf_{\Delta\to0}\mathcal{G}_j(F)>0$ for each $j\geq3$, whereas \eqref{eqn:G2-hard} gives $\mathcal{G}_2(F) \to 0$. Thus $\min_{j\neq1}\mathcal{G}_j(F)=\mathcal{G}_2(F)$ for sufficiently small $\Delta$. For $K=2$ this result is immediate. Combining \eqref{eqn:G2-hard} with the black-box expansion in \eqref{eqn:shah}, we thus obtain now
\begin{align*}
T_{\rm gb}(F)
&=\dfrac{1}{\mathcal{G}_2(F)}
=\dfrac{4\bar p}{(c+1) \Delta^2} \big(1+O(\Delta)\big)\\
&=\dfrac{T_{\rm bb}(F)}{c+1}\big(1+O(\Delta)\big).
\end{align*}

\end{proof}

\paragraph{Insights on the Grid Design}

The averaged wealth \eqref{eqn:mixture} grows at the best rate on the grid. For each challenger $j\neq1$, let $\lambda_j^\star\in(0,1]$ be a maximizer in \eqref{eqn:main}, and write
$$
\phi_j(\lambda):=\mathbb{E}_F[\log(1+\lambda(\theta_1-\theta_j))], \qquad \mathcal{G}_j(F)=\phi_j(\lambda_j^\star).
$$
The optimal fraction depends on the unknown law $F$, so the grid must cover a range of possible values. The two insights are:
\begin{itemize}[noitemsep, topsep=0pt, left=1em]
    \item[(i)] \textbf{The middle can be coarse}. The function $\phi_j$ is concave and satisfies $\phi_j(0)=0$. Hence
$$
\phi_j(\lambda)\geq\frac{\lambda}{\lambda_j^\star}\mathcal{G}_j(F), \qquad 0\leq\lambda\leq\lambda_j^\star.
$$
Thus a point in $[\lambda_j^\star/r,\lambda_j^\star]$, for $r>1$, retains at least a fraction $1/r$ of the optimal growth rate. The width of this interval is $\lambda_j^\star(1-1/r)$, so relatively coarse spacing suffices when the optimum lies in the middle of $[0,1]$.
    \item[(ii)] \textbf{Both ends should be dense}. When $\lambda_j^\star$ is small, the interval $[\lambda_j^\star/r,\lambda_j^\star]$ is narrow. Under Assumption~\ref{Ass:Dir}, for example, $\lambda_2^\star\approx(c+1)\Delta/(2\bar p)$ tends to zero with the gap for fixed $c$ and $\bar p$. A uniform grid with spacing $h$ cannot provide a positive point below the optimum once $\lambda_j^\star<h$. Geometrically spaced points $r^{-i}$ instead cover progressively smaller fractions. At the other endpoint, highly consistent trajectories can favor fractions close to $1$. We therefore also include points $1-r^{-i}$, which approach $1$ for large $i$.
\end{itemize}

\begin{proof}[Proof of Theorem \ref{thm:ascd}]

We write $\mathcal{F}_n:=\sigma(\boldsymbol{\theta}_i:i\leq n)$ and $M^{(k,j)}_i:=\theta_{i,k}-\theta_{i,j}\in[-1,1]$, so that $W_n^{(k,j)}(\lambda)=\prod_{i\leq n}(1+\lambda M^{(k,j)}_i)$ and $\mathbb{E}_F[M^{(k,j)}_i]=p_k-p_j$. Throughout, $\phi_j(\lambda):=\mathbb{E}_F[\log(1+\lambda M^{(1,j)}_1)]$, so that $\mathcal{G}^\Lambda_j(F)=\max_{\lambda\in\Lambda}\phi_j(\lambda)$ in \eqref{eqn:gridrate}, and for a finite grid $\Lambda\subset[0,1)$ we further define
$$
g_\Lambda:=\min_{j\neq1}\mathcal{G}^{\Lambda}_j(F)=\dfrac{1}{T_{\rm gb}^{\Lambda}(F)}, \qquad B_\Lambda:=\max_{\lambda\in\Lambda}\log\dfrac{1}{1-\lambda}.
$$
Fix $k\neq1$ and $\lambda\in\Lambda$. Since $p_k-p_1<0$,
$$
\mathbb{E}_F\big[W_n^{(k,1)}(\lambda)\mid\mathcal{F}_{n-1}\big]=W_{n-1}^{(k,1)}(\lambda)\cdot\big(1+\lambda\cdot(p_k-p_1)\big)\leq W_{n-1}^{(k,1)}(\lambda),
$$
so $(W_n^{(k,1)}(\lambda))_{n \geq 0}$ is a nonnegative supermartingale with $W_0^{(k,1)}(\lambda)=1$, and so is its average $\overline W_n^{(k,1)}$ over $\lambda\in\Lambda$ defined in \eqref{eqn:mixture}. Ville's inequality for the supermartingale gives
$$
\mathbb{P}_F\Big(\sup_{n\geq1}\overline W_n^{(k,1)}\geq\dfrac{K-1}{\delta}\Big)\leq\dfrac{\delta}{K-1}.
$$
On the event $\{\tau^{\mathrm D}<\infty,\hat a_{\tau^{\mathrm D}}=a_k\}$ the stopping condition of Algorithm \ref{alg:ascd} holds at time $\tau^{\mathrm D}$ with leader $\hat k_{\tau^{\mathrm D}}=k$, and in particular $\overline W_{\tau^{\mathrm D}}^{(k,1)}\geq(K-1)/\delta$. Hence
$$
\mathbb{P}_F(\tau^{\mathrm D}<\infty,\ \hat a_{\tau^{\mathrm D}}\neq a_1)\leq\sum_{k=2}^K\mathbb{P}_F\Big(\sup_{n\geq1}\overline W_n^{(k,1)}\geq\dfrac{K-1}{\delta}\Big)\leq\delta.
$$

\emph{Part (i).} Let $\Lambda=\Lambda_{r,m}$ with $r^{-m}\leq\min_{j\neq1}\lambda^\star_j$. Fix $j \neq 1$, $\phi_j$ is concave on $[0,1]$ with $\phi_j(0)=0$ and $\phi_j(\lambda_j^\star)=\mathcal{G}_j(F)>0$, so $\lambda^\star_j>0$. The grid contains a point $\lambda\in[\lambda_j^\star/r,\lambda_j^\star]$: if $\lambda_j^\star<r^{-1}$, then $r^{-i}\leq\lambda_j^\star<r^{-i+1}$ for some $2\leq i\leq m$ because $r^{-m}\leq\lambda_j^\star$, and $\lambda=r^{-i}$ falls within the bandwidth. If $\lambda_j^\star\geq r^{-1}$, then $\lambda=r^{-1}$ falls within the bandwidth because $\lambda_j^\star \leq 1$. For such $\lambda$, concavity and $\phi_j(0)=0$ give
$$
\phi_j(\lambda)=\phi_j\Big(\dfrac{\lambda}{\lambda_j^\star}\cdot\lambda_j^\star+\Big(1-\dfrac{\lambda}{\lambda_j^\star}\Big)\cdot0\Big)\geq\dfrac{\lambda}{\lambda_j^\star}\cdot\mathcal{G}_j(F)\geq\dfrac{\mathcal{G}_j(F)}{r}.
$$
Hence $\mathcal{G}_j^\Lambda(F)\geq\mathcal{G}_j(F)/r$ for every $j\neq 1$, i.e.,
\begin{align}\label{eqn:gridloss}
g_{\Lambda_{r,m}}\geq\dfrac{1}{r\cdot T_{\rm gb}(F)}>0 \qquad\text{and}\qquad T_{\rm gb}(F)\leq T_{\Lambda_{r,m}}(F)\leq r\cdot T_{\rm gb}(F).
\end{align}

Next, we show that every finite grid $\Lambda$ with $g_\Lambda>0$ satisfies $\limsup_{\delta\to0}\mathbb{E}_F[\tau^{\mathrm D}]/\log(1/\delta)\leq T_{\rm gb}^{\Lambda}(F)$. For each $j \neq 1$ fix $\lambda_j \in \argmax_{\lambda\in\Lambda}\phi_j(\lambda)$, so that $\phi_j(\lambda_j)=\mathcal{G}^\Lambda_j(F)\geq g_\Lambda$. The variables $Y_{i,j}:=\log(1+\lambda_jM^{(1,j)}_i)$ are i.i.d. in $i$ with mean $\phi_j(\lambda_j)$ and, since $\lambda_j<1$ and $|M^{(1,j)}_i|\leq1$, bounded by $|Y_{i,j}|\leq B_\Lambda$. Let $L:=\log\big((K-1)|\Lambda|/\delta\big)$ and consider the good event
$$
E_n:=\bigcap_{j\neq1}\Big\{\sum_{i=1}^nY_{i,j}\geq L\Big\} \cap \big\{\hat k_n=1\big\}
$$
that at time $n$, the leader is the true modal answer and the cumulative log wealth for each challenger $j$ is greater than $L$. Condition on the good event $E_n$, we have for every $j\neq1$, $\overline W_n^{(1,j)}\geq|\Lambda|^{-1}\cdot W_n^{(1,j)}(\lambda_j)=|\Lambda|^{-1}\cdot\exp\big(\sum_{i\leq n}Y_{i,j}\big)\geq(K-1)/\delta$ (since the wealth is nonnegative), so the stopping condition holds at time $n$ with leader $a_1$ and $\tau^{\mathrm D}\leq n$. Therefore $\mathbb{P}_F(\tau^{\mathrm D}>n)\leq\mathbb{P}_F(E_n^c)$, and we bound the two parts of $E_n^c$ by Hoeffding's inequality: for i.i.d. $Z_1,\cdots,Z_n\in[a,b]$ with mean $\mu$ and $t>0$, we have
$$
\mathbb{P}\Big(\sum_{i=1}^nZ_i\leq n\cdot(\mu-t)\Big)\leq\exp\Big(-\dfrac{2nt^2}{(b-a)^2}\Big).
$$
First, $\hat k_n\neq1$ requires $S_{n,1}-S_{n,k}=\sum_{i\leq n}M^{(1,k)}_i\leq0$ for some $k \neq 1$, and $M^{(1,k)}_i\in[-1,1]$ has mean $p_1-p_k\geq\Delta$, so
$$
\mathbb{P}_F(\hat k_n\neq1)\leq\sum_{k\neq1}\mathbb{P}_F\Big(\sum_{i\leq n}M^{(1,k)}_i\leq0\Big)\leq(K-1)\cdot \exp(-n\Delta^2/2).
$$
Second, fix $\varepsilon\in(0,1)$ and let $n_0:=\lceil L/((1-\varepsilon)g_\Lambda)\rceil$. For $n\geq n_0$ we have $L\leq(1-\varepsilon)\cdot ng_\Lambda\leq n\phi_j(\lambda_j)-\varepsilon ng_\Lambda$, so $\sum_{i\leq n}Y_{i,j}<L$ implies $\sum_{i\leq n}Y_{i,j}\leq n\cdot(\phi_j(\lambda_j)-\varepsilon g_\Lambda)$, and Hoeffding's inequality with $[a,b]=[-B_\Lambda,B_\Lambda]$ and $t=\varepsilon g_\Lambda$ gives
$$
\mathbb{P}_F\Big(\sum_{i\leq n}Y_{i,j}<L\Big)\leq\exp\Big(-\dfrac{n\varepsilon^2g_\Lambda^2}{2B_\Lambda^2}\Big), \qquad n\geq n_0.
$$
With $\gamma:=\min\{\Delta^2/2,\varepsilon^2g_\Lambda^2/(2B_\Lambda^2)\}>0$ we obtain $\mathbb{P}_F(\tau^{\mathrm D}>n)\leq2(K-1)\cdot e^{-\gamma n}$ for all $n\geq n_0$. Hence $\tau^{\mathrm D}<\infty$ almost surely, and
\begin{align}\label{eqn:nonasymp}
\begin{split}
\mathbb{E}_F[\tau^{\mathrm D}]&=\sum_{n\geq0}\mathbb{P}_F(\tau^{\mathrm D}>n)
\leq n_0+\sum_{n\geq n_0}2(K-1)\cdot \exp(-\gamma n)\\
&\leq\dfrac{\log\big((K-1)|\Lambda|/\delta\big)}{(1-\varepsilon)\cdot g_\Lambda}+1+\dfrac{2(K-1)}{1-\exp(-\gamma)}.
\end{split}
\end{align}
For a fixed grid the last two terms do not depend on $\delta$, so dividing by $\log(1/\delta)$, letting $\delta\to0$ and then $\varepsilon\to0$ gives $\limsup_{\delta\to0}\mathbb{E}_F[\tau^{\mathrm D}]/\log(1/\delta)\leq T_{\rm gb}^{\Lambda}(F)$.

\emph{Part (ii).} We write $\tau^{\mathrm D}_\delta$ for the stopping time with the grid $\Lambda_\delta:=\Lambda_{r_\delta,m_\delta}$ and $\ell:=\log(1/\delta)$ and let $\delta$ be small enough so that $\ell \geq 1$. Now according to the definition, we have $r_{\delta} = 1 + \ell^{-1/4}$ and $m_\delta := \big\lceil \ell ^{1/2}\big\rceil$. Then $1<r_\delta\leq2$ and $m_\delta\leq2\ell^{1/2}$. Since $\log(1+x)\geq x/2$ for $x\in[0,1]$,
$$
m_\delta\log r_\delta \geq \ell^{1/2}\cdot\dfrac{1}{2} \ell^{-1/4} =\dfrac{1}{2}\ell^{1/4}\to\infty,
$$
so $r_\delta^{-m_\delta}\to 0$ and the condition $r_\delta^{-m_\delta}\leq\min_{j\neq 1} \lambda^\star_j$ of Part (i) holds for all $\delta$ small enough. For such $\delta$, \eqref{eqn:gridloss} gives $g_{\Lambda_\delta}\geq1/(r_\delta T_{\rm gb}(F))\geq1/(2T_{\rm gb}(F))$, and the argument of Part (i) gives $\tau^{\mathrm D}_\delta<\infty$ almost surely together with \eqref{eqn:nonasymp}.

It remains to control how the grid enters \eqref{eqn:nonasymp}. First, $|\Lambda_\delta|\leq2m_\delta\leq 4\ell^{1/2}$, so $\log((K-1)|\Lambda_\delta|)=O(\log\ell)$. Second, a point $\lambda = 1-r_\delta^{-i}$ contributes $\log(1/(1-\lambda))=i\log r_\delta\leq m_\delta\log r_\delta\leq m_\delta(r_\delta-1)\leq2\ell^{1/4}$ to $B_{\Lambda_\delta}$, and a point $\lambda = r_\delta^{-i}$ contributes at most $\log(1/(1-r_\delta^{-1}))=\log(1+1/(r_\delta-1))=\log(1+\ell^{1/4})\leq\ell^{1/4}$. Hence $B_{\Lambda_\delta}\leq 2\ell^{1/4}$ and now we need
$$
\gamma\geq\min\Big\{\dfrac{\Delta^2}{2}, \dfrac{\varepsilon^2}{32T_{\rm gb}(F)^2\,\ell^{1/2}}\Big\}.
$$
Using $1/(1-\exp(-\gamma))\leq1+1/\gamma$, the last two terms of \eqref{eqn:nonasymp} are $O(\ell^{1/2})$ for fixed $\varepsilon$. Dividing \eqref{eqn:nonasymp} by $\ell$ and using $1/g_{\Lambda_\delta}\leq r_\delta T_{\rm gb}(F)$ gives:
$$
\dfrac{\mathbb{E}_F[\tau^{\mathrm D}_\delta]}{\log(1/\delta)}\leq\dfrac{r_\delta\cdot T_{\rm gb}(F)}{1-\varepsilon}\cdot\Big(1+\dfrac{\log((K-1)|\Lambda_\delta|)}{\ell}\Big)+O(\ell^{-1/2})\xrightarrow[\delta\to0]{}\dfrac{T_{\rm gb}(F)}{1-\varepsilon}.
$$
since $r_\delta\to1$. Letting $\varepsilon \to 0$ then proves \eqref{eqn:ascd-exact}.
\end{proof}

\newpage
\section{A Real-World Grey-Box Observation Example}\label{app:greybox-demo}

We give a complete example from the released demo data to make the grey-box observation model operationally concrete. The record is trajectory 0 (seed 20260922) for ARC-Challenge test item \texttt{MCAS\_2001\_5\_3}, generated by \texttt{mlx-community/Llama-3.2-3B-Instruct-bf16} at reasoning temperature $T_r=0.6$ (top-$p=0.95$, top-$k=20$). Answer probabilities are evaluated at the natural temperature $T_a=1$.

\paragraph{Original problem}
\begin{quote}\small
\textbf{Question:} Which of the following is LEAST responsible for the weathering of rocks?\\
\textbf{A.} freezing\\
\textbf{B.} plant growth\\
\textbf{C.} rain\\
\textbf{D.} lightning
\end{quote}
The benchmark answer is D.

\paragraph{Prompt}
The model receives the following system and user messages through its chat template.
\begin{quote}\small
\textbf{System:} You are a careful reasoner. The final answer must be exactly one of A, B, C, or D.

\medskip
\textbf{User:} Solve this multiple-choice problem carefully.

\medskip
Question:\\
Which of the following is LEAST responsible for the weathering of rocks?

\medskip
Options:\\
A. freezing\\
B. plant growth\\
C. rain\\
D. lightning

\medskip
Give a concise explanation in at most 120 words. Do not repeat the question or options. During this stage, do not output the final A/B/C/D letter and do not quote the exact chosen option. End exactly with \texttt{<END\_REASONING>}.
\end{quote}

\paragraph{LLM output}
The sampled reasoning trajectory, before answer scoring, is:
\begin{quote}\small
Plant growth is not a primary cause of weathering, as it primarily contributes to physical weathering through roots breaking apart the rock. Freezing, on the other hand, is a significant factor in chemical weathering, causing ice to expand and exert pressure on the rock, leading to cracking. Rain is a major contributor to physical weathering, as it can physically wear away the rock surface through erosion and abrasion. Lightning, while not directly causing weathering, can contribute to it by igniting wildfires that can further break down the rock.

\medskip
\texttt{END\_REASONING}
\end{quote}

\paragraph{The observed probability vector}
After this trajectory, we append the decision prefix
\begin{quote}\small\ttfamily
Final answer (one letter only):
\end{quote}
The prefix ends in the token "\texttt{):}" (token ID 1680). Each allowed continuation is one token: \texttt{ A} (362), \texttt{ B} (426), \texttt{ C} (356), or \texttt{ D} (423). Thus the observation comes from one precisely identified next-token position: the transitions from the terminal \texttt{):} token to each of the four option tokens. If $\ell_k$ is the full-vocabulary log-probability of option token $k$ at that position, we record
\[
q_k=\exp(\ell_k),
\qquad
\theta_k=\frac{\exp(\ell_k/T_a)}{\sum_{j\in\{\mathrm A,\mathrm B,\mathrm C,\mathrm D\}}\exp(\ell_j/T_a)}.
\]
For this trajectory, the exact values are:

\begin{table}[H]

    \caption{A single real-world grey-box observation. The raw probabilities $q_k$ are the full-vocabulary next-token probabilities assigned to the four allowed label tokens. Their total mass is $0.9630$ and renormalizing over A--D at $T_a=1$ gives the observed vector $\boldsymbol{\theta}=(0.2635,0.4344,0.2052,0.0969)$.}
    \label{tab:greybox-demo}
    \begin{center}
    \small
    \begin{tabular}{lrrrr}
        \toprule
        Option token & Token ID & $\ell_k$ & $q_k$ & $\theta_k$ \\
        \midrule
        \texttt{ A} & 362 & $-1.3715$ & $0.2537$ & $0.2635$ \\
        \texttt{ B} & 426 & $-0.8715$ & $0.4183$ & $0.4344$ \\
        \texttt{ C} & 356 & $-1.6215$ & $0.1976$ & $0.2052$ \\
        \texttt{ D} & 423 & $-2.3715$ & $0.0933$ & $0.0969$ \\
        \midrule
        Sum & & & $0.9630$ & $1.0000$ \\
        \bottomrule
    \end{tabular}
    \end{center}
\end{table}

This observation is not a product of probabilities along the reasoning trajectory. It is a single next-token softmax, restricted and renormalized over the known candidate labels, after the sampled reasoning has been placed in the context.

\paragraph{Final answer}
Since $\argmax_k\theta_k=\mathrm B$, greedy decoding returns \textbf{B}. The released record also stores an independently seeded paired black-box draw $o\sim\Cat(\boldsymbol{\theta})$, which happened to be A in this trajectory; ASC-D instead observes the entire vector in Table~\ref{tab:greybox-demo}. The greedy answer B and the paired draw A are both incorrect relative to the benchmark answer D, illustrating that the grey-box observation exposes uncertainty rather than assuming that a single trajectory is correct.

\newpage

\section{Additional Synthetic Experiments}\label{app:synthetic-additional}

The following experiments test two distinct theoretical predictions. The first validates the characteristic-time scaling for a general, non-Dirichlet feedback law, while the second examines how the gain from distribution-valued feedback changes with Dirichlet concentration.

\paragraph{Characteristic Time Validation}
To test the characteristic-time prediction beyond the Dirichlet model, we fix $K=10$ and $\boldsymbol{\pi}=(0.4,0.3,0.0375\mathbf 1_8)$ and consider the discrete family
\begin{align}\label{eqn:soft-vertex}
Z\sim\Cat(\boldsymbol{\pi}), \qquad
\boldsymbol{\theta}=(1-\rho)\boldsymbol{e}_Z+\rho\boldsymbol{\pi},
\qquad \rho\in[0,1].
\end{align}
Every $F_\rho$ has mean $\boldsymbol{\pi}$ and therefore induces the same black-box law $\Cat(\boldsymbol{\pi})$. Increasing $\rho$ changes only the information in the probability vector, moving from the vertex law $F_0$ to the degenerate law $F_1=\delta_{\boldsymbol{\pi}}$.
For $\rho\in\{0,0.2,0.4,0.6,0.8,1\}$, we compute $T_{\rm gb}(F_\rho)$ and run 2,000 repetitions of ASC-D for $\delta\in\{0.2,0.1,0.05,0.02,0.01,0.002,10^{-3},2\times10^{-4},10^{-5},10^{-6},10^{-8}\}$. We use $\Lambda_{1.5,16}$ and a maximum budget of 20,000. For each $\rho$, we estimate the empirical characteristic time by fitting $\mathbb{E}[\tau^{\rm D}]=\alpha_\rho+\beta_\rho\log(1/\delta)$ over $\delta\leq10^{-3}$. Confidence intervals use 2,000 paired bootstrap samples.

\begin{figure}[H]
    \centering
    \includegraphics[width=0.8\linewidth]{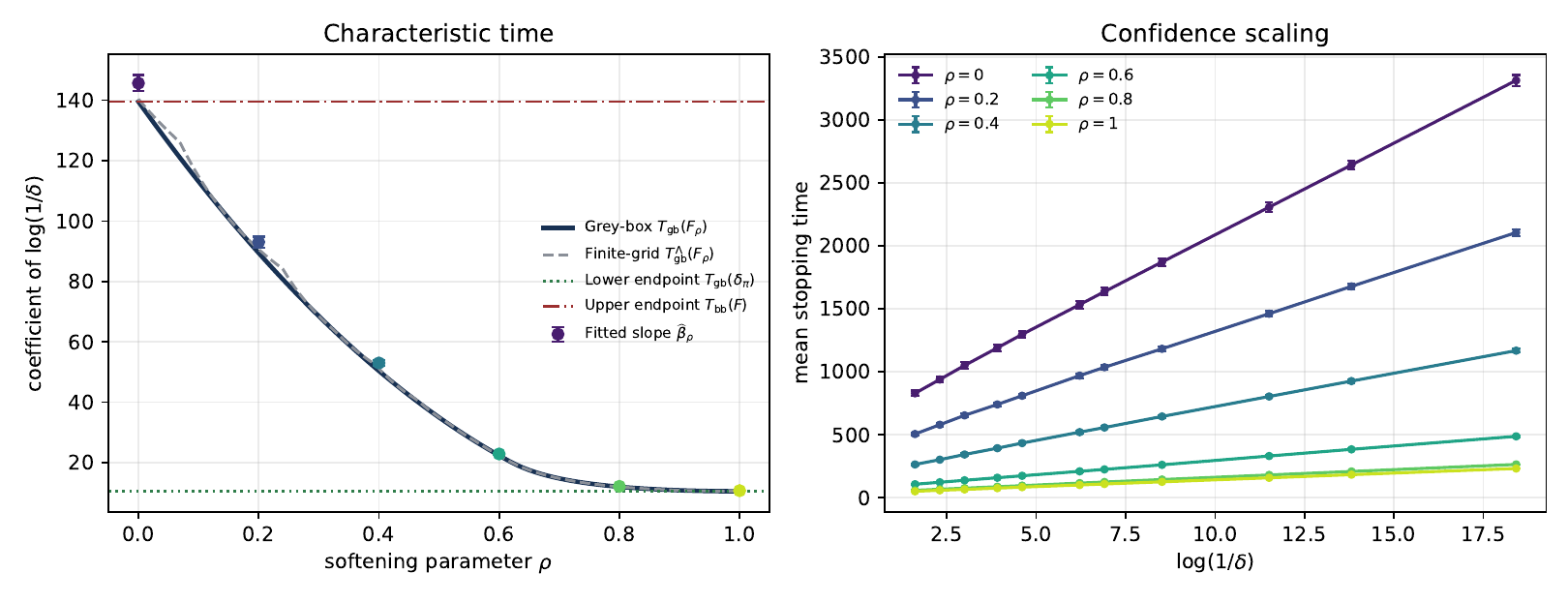}
    \caption{Characteristic-time scaling across the feedback family $F_\rho$. }
    \label{fig:synthetic-general}
\end{figure}

Figure~\ref{fig:synthetic-general} compares theoretical and empirical characteristic times (left) and mean stopping time versus $\log(1/\delta)$ (right). Although all $F_\rho$ induce the same black-box law, $T_{\rm gb}(F_\rho)$ decreases from the vertex upper bound toward the degenerate lower bound as $\rho$ increases. The fitted slopes differ from $T_{\rm gb}(F_\rho)$ by at most $5.2\%$, while the finite grid increases the predicted leading constant by at most $1.53\%$. The approximately linear trends in the right panel support the predicted logarithmic dependence on $1/\delta$.

\paragraph{Effect of Dirichlet Concentration}

To examine the prediction in Proposition~\ref{prop:hard}, we fix $K=10$, $\delta=0.05$, and $\boldsymbol{\pi}=(0.4,0.3,0.0375\mathbf 1_8)$, and draw $\boldsymbol{\theta}_t\sim\operatorname{Dir}(c\boldsymbol{\pi})$ for $c\in\{0.01,0.1,1,3,10,100,1000\}$. For each $c$, we run 1,000 paired trials of ASC-D using either the full vector $\boldsymbol{\theta}_t$ or the one-hot sample $\boldsymbol e_{Y_t}$, where $Y_t\mid\boldsymbol{\theta}_t\sim\operatorname{Cat}(\boldsymbol{\theta}_t)$. Both versions use the same grid, stopping rule, and maximum budget of 65,536.

\begin{figure}[H]
    \centering
    \includegraphics[width=0.5\linewidth]{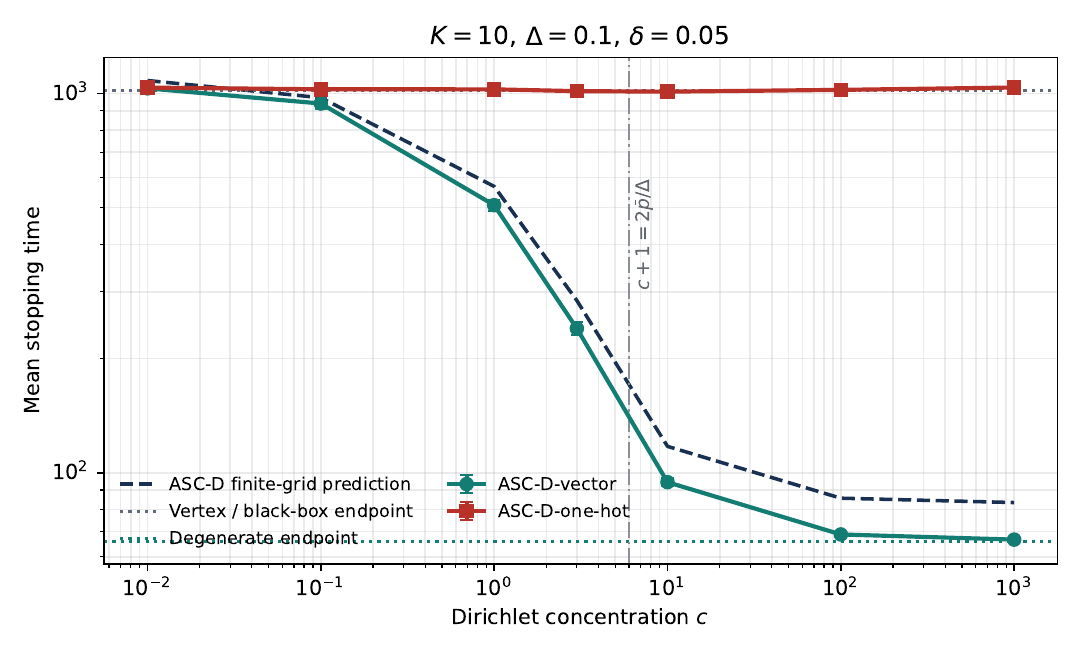}
    \caption{Mean stopping time versus Dirichlet concentration.}
    \label{fig:synthetic-concentration}
\end{figure}

As shown in Figure~\ref{fig:synthetic-concentration}, the one-hot stopping time is nearly constant because its marginal feedback law remains $\Cat(\boldsymbol{\pi})$, whereas full-vector ASC-D becomes faster as increasing concentration makes trajectories more consistent. The two methods perform similarly near the vertex endpoint; full-vector ASC-D is $4.22\times$ faster at $c=3$ and $15.56\times$ faster at $c=1000$, close to the limiting gain of $15.58\times$. The concentration-dependent gain and its eventual saturation are consistent with Proposition~\ref{prop:hard}.

\newpage

\section{Additional Real-World Analyses}\label{app:real-world-additional}

This section supplements Section~\ref{sec:real-world} with uncertainty estimates for the modal-gap scaling analysis and budget-resolved results for the fixed-budget cross-dataset replication.

\paragraph{Uncertainty in Modal-Gap Scaling}
For each of the 90 common questions in Exp.~1, we regress log required budget on $\log(1/\Delta)$ across $T_a\in\{4,8,16,32\}$. Table~\ref{tab:exp1-gap-scaling} reports the median within-question slope. Here the confidence intervals use 5,000 paired question-bootstrap resamples.

\begin{table}[H]
    \setlength{\tabcolsep}{4pt}
    \caption{Gap-scaling exponents on the 90 common questions.}
    \begin{center}
    \label{tab:exp1-gap-scaling}
    \begin{tabular}{lcccc}
        \toprule
        & ASC-D & ASC-D (one-hot) & PPR-1v1 & CITE-4 \\
        \midrule
        Median $\widehat{\beta}$ & \textbf{0.766} & 1.969 & 1.966 & 1.854 \\
        95\% CI & $[0.692,0.810]$ & $[1.953,2.001]$ & $[1.949,1.990]$ & $[1.841,1.861]$ \\
        \bottomrule
    \end{tabular}
    \end{center}
\end{table}

\begin{figure}[H]
    \centering
    \includegraphics[width=0.7\linewidth]{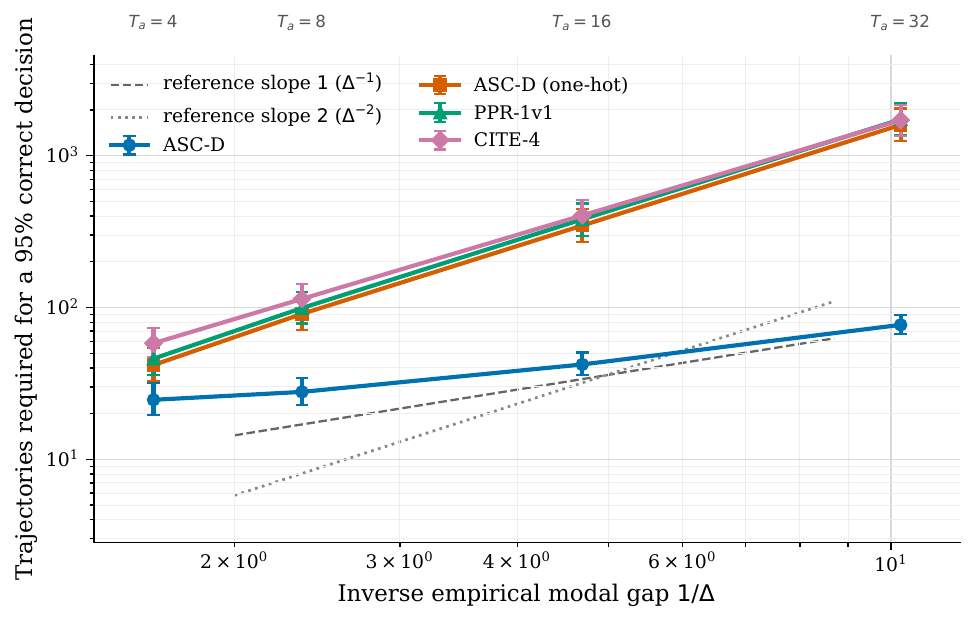}
    \caption{Required trajectory budget versus inverse empirical modal gap on the 90 common questions. Points show geometric means, error bars show 95\% paired question-bootstrap confidence intervals, and reference lines have slopes one and two on the log--log scale.}
    \label{fig:exp1-required-budget-loglog}
\end{figure}
The bootstrap intervals separate ASC-D from its matched one-hot ablation and the other hard-label methods. Figure~\ref{fig:exp1-required-budget-loglog} shows the same contrast directly: as the modal gap narrows, the required budget grows much more slowly for ASC-D than for ASC-D (one-hot), PPR-1v1, and CITE-4. These are finite-range empirical slopes and do not imply asymptotic scaling better than $\Delta^{-1}$.

\paragraph{Cross-Dataset Replication on ARC-Challenge}
We replicate Exp.~2 on a fixed set of 100 four-choice ARC-Challenge questions using Qwen3-1.7B and Llama-3.2-1B. For each model and question, 128 probability vectors define the empirical population and its modal answer; we retain all 100 questions without screening by modal majority. We run 200 paired replays at $1-\delta\in\{0.95,0.975,0.99\}$ with the same $N=128$ trajectory budget as in Section~\ref{sec:real-world}. At each step, ASC-D observes a resampled probability vector, whereas CITE-4 and PPR-1v1 observe the same paired categorical draw. As in Exp.~2, a replay succeeds only if the method stops within the budget and returns the empirical mode.

\begin{table}[H]
    \caption{Correct-certification rate within $N=128$ trajectories on 100 ARC-Challenge questions. Each entry aggregates 200 paired replays per question.}
    \label{tab:arc-cross-dataset-certification}
    \begin{center}
    \small
    \setlength{\tabcolsep}{7pt}
    \renewcommand{\arraystretch}{1.08}
    \begin{tabular}{llccc}
        \toprule
        Model & Method
        & $1-\delta=0.95$
        & $1-\delta=0.975$
        & $1-\delta=0.99$ \\
        \midrule
        Llama-3.2-1B
            & ASC-D   & \textbf{74.48\%} & \textbf{71.99\%} & \textbf{69.06\%} \\
            & CITE-4  & 55.59\% & 53.40\% & 51.20\% \\
            & PPR-1v1 & 54.25\% & 52.43\% & 50.30\% \\
        \midrule
        Qwen3-1.7B
            & ASC-D   & \textbf{97.48\%} & \textbf{97.27\%} & \textbf{96.94\%} \\
            & CITE-4  & 97.07\% & 96.72\% & 96.20\% \\
            & PPR-1v1 & 96.86\% & 96.55\% & 95.97\% \\
        \bottomrule
    \end{tabular}
    \end{center}
\end{table}
Table~\ref{tab:arc-cross-dataset-certification} shows that ASC-D achieves the highest correct-certification rate for both models at every confidence level. The gains are larger for Llama-3.2-1B: ASC-D outperforms CITE-4 by $17.86$--$18.89$ percentage points and PPR-1v1 by $18.76$--$20.23$ percentage points. For Qwen3-1.7B, where all methods already achieve high certification rates, the gains are smaller but remain consistent, ranging from $0.41$ to $0.97$ percentage points. These results demonstrate that the benefit of distribution-valued feedback transfers from MMLU-Redux to ARC-Challenge and is especially substantial in more challenging certification settings.

\end{document}